\newtheorem{theorem}{Theorem}
\newtheorem{axiom}[theorem]{Axiom}
\newtheorem{conjecture}[theorem]{Conjecture}
\newtheorem{corollary}[theorem]{Corollary}
\newtheorem{definition}[theorem]{Definition}
\newtheorem{example}[theorem]{Example}
\newtheorem{exercise}[theorem]{Exercise}
\newtheorem{lemma}[theorem]{Lemma}
\newtheorem{proposition}[theorem]{Proposition}
\newtheorem{remark}[theorem]{Remark}
\newenvironment{proof}[1][Proof]{\noindent\textbf{#1.} }{\ \rule{0.5em}{0.5em}}
\let\pdfoutput=\undefined\fi
\chardef\@x10\chardef\@xv60
\def\tcitime{
\def\@time{%
  \@minute\time\@hour\@minute\divide\@hour\@xv
  \ifnum\@hour<\@x 0\fi\the\@hour:%
  \multiply\@hour\@xv\advance\@minute-\@hour
  \ifnum\@minute<\@x 0\fi\the\@minute
  }}%
\def\x@hyperref#1#2#3{%
   \catcode`\~ = 12
   \catcode`\$ = 12
   \catcode`\_ = 12
   \catcode`\# = 12
   \catcode`\& = 12
   \y@hyperref{#1}{#2}{#3}%
}
\def\y@hyperref#1#2#3#4{%
   #2\ref{#4}#3
   \catcode`\~ = 13
   \catcode`\$ = 3
   \catcode`\_ = 8
   \catcode`\# = 6
   \catcode`\& = 4
}
\def\QCTOpt[#1]#2{%
  \def\QCTOptB{#1}
  \def\QCTOptA{#2}
}
\def\QCTNOpt#1{%
  \def\QCTOptA{#1}
  \let\QCTOptB\empty
}
\def\Qct{%
  \@ifnextchar[{%
    \QCTOpt}{\QCTNOpt}
}
\def\QCBOpt[#1]#2{%
  \def\QCBOptB{#1}%
  \def\QCBOptA{#2}%
}
\def\QCBNOpt#1{%
  \def\QCBOptA{#1}%
  \let\QCBOptB\empty
}
\def\Qcb{%
  \@ifnextchar[{%
    \QCBOpt}{\QCBNOpt}%
}
\def\PrepCapArgs{%
  \ifx\QCBOptA\empty
    \ifx\QCTOptA\empty
      {}%
    \else
      \ifx\QCTOptB\empty
        {\QCTOptA}%
      \else
        [\QCTOptB]{\QCTOptA}%
      \fi
    \fi
  \else
    \ifx\QCBOptA\empty
      {}%
    \else
      \ifx\QCBOptB\empty
        {\QCBOptA}%
      \else
        [\QCBOptB]{\QCBOptA}%
      \fi
    \fi
  \fi
}
\def\GRAPHICSPS#1{%
 \ifcase\GRAPHICSTYPE
   \special{ps: #1}%
 \or
   \special{language "PS", include "#1"}%
 \fi
}%
\def\graffile#1#2#3#4{%
    \bgroup
	   \@inlabelfalse
       \leavevmode
       \@ifundefined{bbl@deactivate}{\def~{\string~}}{\activesoff}%
        \raise -#4 \BOXTHEFRAME{%
           \hbox to #2{\raise #3\hbox to #2{\null #1\hfil}}}%
    \egroup
}%
\def\draftbox#1#2#3#4{%
 \leavevmode\raise -#4 \hbox{%
  \frame{\rlap{\protect\tiny #1}\hbox to #2%
   {\vrule height#3 width\z@ depth\z@\hfil}%
  }%
 }%
}%
\let\nographics=\@msidraft
\newif\ifwasdraft
\def\GRAPHIC#1#2#3#4#5{%
   \ifnum\@msidraft=\@ne\draftbox{#2}{#3}{#4}{#5}%
   \else\graffile{#1}{#3}{#4}{#5}%
   \fi
}
\def\addtoLaTeXparams#1{%
    \edef\LaTeXparams{\LaTeXparams #1}}%
\newif\ifBoxFrame \BoxFramefalse
\newif\ifOverFrame \OverFramefalse
\newif\ifUnderFrame \UnderFramefalse
\def\BOXTHEFRAME#1{%
   \hbox{%
      \ifBoxFrame
         \frame{#1}%
      \else
         {#1}%
      \fi
   }%
}
\def\doFRAMEparams#1{\BoxFramefalse\OverFramefalse\UnderFramefalse\readFRAMEparams#1\end}%
\def\readFRAMEparams#1{%
 \ifx#1\end%
  \let\next=\relax
  \else
  \ifx#1i\dispkind=\z@\fi
  \ifx#1d\dispkind=\@ne\fi
  \ifx#1f\dispkind=\tw@\fi
  \ifx#1t\addtoLaTeXparams{t}\fi
  \ifx#1b\addtoLaTeXparams{b}\fi
  \ifx#1p\addtoLaTeXparams{p}\fi
  \ifx#1h\addtoLaTeXparams{h}\fi
  \ifx#1X\BoxFrametrue\fi
  \ifx#1O\OverFrametrue\fi
  \ifx#1U\UnderFrametrue\fi
  \ifx#1w
    \ifnum\@msidraft=1\wasdrafttrue\else\wasdraftfalse\fi
    \@msidraft=\@ne
  \fi
  \let\next=\readFRAMEparams
  \fi
 \next
 }%
\def\IFRAME#1#2#3#4#5#6{%
      \bgroup
      \let\QCTOptA\empty
      \let\QCTOptB\empty
      \let\QCBOptA\empty
      \let\QCBOptB\empty
      #6%
      \parindent=0pt
      \leftskip=0pt
      \rightskip=0pt
      \setbox0=\hbox{\QCBOptA}%
      \@tempdima=#1\relax
      \ifOverFrame
          \typeout{This is not implemented yet}%
          \show\HELP
      \else
         \ifdim\wd0>\@tempdima
            \advance\@tempdima by \@tempdima
            \ifdim\wd0 >\@tempdima
               \setbox1 =\vbox{%
                  \unskip\hbox to \@tempdima{\hfill\GRAPHIC{#5}{#4}{#1}{#2}{#3}\hfill}%
                  \unskip\hbox to \@tempdima{\parbox[b]{\@tempdima}{\QCBOptA}}%
               }%
               \wd1=\@tempdima
            \else
               \textwidth=\wd0
               \setbox1 =\vbox{%
                 \noindent\hbox to \wd0{\hfill\GRAPHIC{#5}{#4}{#1}{#2}{#3}\hfill}\\%
                 \noindent\hbox{\QCBOptA}%
               }%
               \wd1=\wd0
            \fi
         \else
            \ifdim\wd0>0pt
              \hsize=\@tempdima
              \setbox1=\vbox{%
                \unskip\GRAPHIC{#5}{#4}{#1}{#2}{0pt}%
                \break
                \unskip\hbox to \@tempdima{\hfill \QCBOptA\hfill}%
              }%
              \wd1=\@tempdima
           \else
              \hsize=\@tempdima
              \setbox1=\vbox{%
                \unskip\GRAPHIC{#5}{#4}{#1}{#2}{0pt}%
              }%
              \wd1=\@tempdima
           \fi
         \fi
         \@tempdimb=\ht1
         \advance\@tempdimb by -#2
         \advance\@tempdimb by #3
         \leavevmode
         \raise -\@tempdimb \hbox{\box1}%
      \fi
      \egroup%
}%
\def\DFRAME#1#2#3#4#5{%
  \vspace\topsep
  \hfil\break
  \bgroup
     \leftskip\@flushglue
	 \rightskip\@flushglue
	 \parindent\z@
	 \parfillskip\z@skip
     \let\QCTOptA\empty
     \let\QCTOptB\empty
     \let\QCBOptA\empty
     \let\QCBOptB\empty
	 \vbox\bgroup
        \ifOverFrame 
           #5\QCTOptA\par
        \fi
        \GRAPHIC{#4}{#3}{#1}{#2}{\z@}%
        \ifUnderFrame 
           \break#5\QCBOptA
        \fi
	 \egroup
  \egroup
  \vspace\topsep
  \break
}%
\def\FFRAME#1#2#3#4#5#6#7{%
  \@ifundefined{floatstyle}
    {
     \begin{figure}[#1]%
    }
    {
	 \ifx#1h
      \begin{figure}[H]%
	 \else
      \begin{figure}[#1]%
	 \fi
	}
  \let\QCTOptA\empty
  \let\QCTOptB\empty
  \let\QCBOptA\empty
  \let\QCBOptB\empty
  \ifOverFrame
    #4
    \ifx\QCTOptA\empty
    \else
      \ifx\QCTOptB\empty
        \caption{\QCTOptA}%
      \else
        \caption[\QCTOptB]{\QCTOptA}%
      \fi
    \fi
    \ifUnderFrame\else
      \label{#5}%
    \fi
  \else
    \UnderFrametrue%
  \fi
  \begin{center}\GRAPHIC{#7}{#6}{#2}{#3}{\z@}\end{center}%
  \ifUnderFrame
    #4
    \ifx\QCBOptA\empty
      \caption{}%
    \else
      \ifx\QCBOptB\empty
        \caption{\QCBOptA}%
      \else
        \caption[\QCBOptB]{\QCBOptA}%
      \fi
    \fi
    \label{#5}%
  \fi
  \end{figure}%
 }%
\def\makeactives{
  \catcode`\"=\active
  \catcode`\;=\active
  \catcode`\:=\active
  \catcode`\'=\active
  \catcode`\~=\active
}
   \gdef\activesoff{%
      \def"{\string"}%
      \def;{\string;}%
      \def:{\string:}%
      \def'{\string'}%
      \def~{\string~}%
    }
\def\FRAME#1#2#3#4#5#6#7#8{%
 \bgroup
 \ifnum\@msidraft=\@ne
   \wasdrafttrue
 \else
   \wasdraftfalse%
 \fi
 \def\LaTeXparams{}%
 \dispkind=\z@
 \def\LaTeXparams{}%
 \doFRAMEparams{#1}%
 \ifnum\dispkind=\z@\IFRAME{#2}{#3}{#4}{#7}{#8}{#5}\else
  \ifnum\dispkind=\@ne\DFRAME{#2}{#3}{#7}{#8}{#5}\else
   \ifnum\dispkind=\tw@
    \edef\@tempa{\noexpand\FFRAME{\LaTeXparams}}%
    \@tempa{#2}{#3}{#5}{#6}{#7}{#8}%
    \fi
   \fi
  \fi
  \ifwasdraft\@msidraft=1\else\@msidraft=0\fi{}%
  \egroup
 }%
\def\TEXUX#1{"texux"}
\long\def\QQQ#1#2{%
     \long\expandafter\def\csname#1\endcsname{#2}}%
\long\def\QQA#1#2{}%
\def\QTR#1#2{{\csname#1\endcsname {#2}}}%
\def\EXPAND#1[#2]#3{}%
\def\NOEXPAND#1[#2]#3{}%
\def\LaTeXparent#1{}%
\def\ChildStyles#1{}%
\def\ChildDefaults#1{}%
\def\QTagDef#1#2#3{}%
  \providecommand{\UNICODE}[2][]{\protect\rule{.1in}{.1in}}
  \providecommand{\U}[1]{\protect\rule{.1in}{.1in}}
\def\QQfnmark#1{\footnotemark}
 \def\abstract{%
  \if@twocolumn
   \section*{Abstract (Not appropriate in this style!)}%
   \else \small 
   \begin{center}{\bf Abstract\vspace{-.5em}\vspace{\z@}}\end{center}%
   \quotation 
   \fi
  }%
   \def\registered{\relax\ifmmode{}\r@gistered
                    \else$\m@th\r@gistered$\fi}%
 \def\r@gistered{^{\ooalign
  {\hfil\raise.07ex\hbox{$\scriptstyle\rm\text{R}$}\hfil\crcr
  \mathhexbox20D}}}}{}%
\newdimen\theight
\def\newfmtname{LaTeX2e}
  \DeclareOldFontCommand{\rm}{\normalfont\rmfamily}{\mathrm}
  \DeclareOldFontCommand{\sf}{\normalfont\sffamily}{\mathsf}
  \DeclareOldFontCommand{\tt}{\normalfont\ttfamily}{\mathtt}
  \DeclareOldFontCommand{\bf}{\normalfont\bfseries}{\mathbf}
  \DeclareOldFontCommand{\it}{\normalfont\itshape}{\mathit}
  \DeclareOldFontCommand{\sl}{\normalfont\slshape}{\@nomath\sl}
  \DeclareOldFontCommand{\sc}{\normalfont\scshape}{\@nomath\sc}
\def\alpha{{\Greekmath 010B}}%
\def\beta{{\Greekmath 010C}}%
\def\gamma{{\Greekmath 010D}}%
\def\delta{{\Greekmath 010E}}%
\def\epsilon{{\Greekmath 010F}}%
\def\zeta{{\Greekmath 0110}}%
\def\eta{{\Greekmath 0111}}%
\def\theta{{\Greekmath 0112}}%
\def\iota{{\Greekmath 0113}}%
\def\kappa{{\Greekmath 0114}}%
\def\lambda{{\Greekmath 0115}}%
\def\mu{{\Greekmath 0116}}%
\def\nu{{\Greekmath 0117}}%
\def\xi{{\Greekmath 0118}}%
\def\pi{{\Greekmath 0119}}%
\def\rho{{\Greekmath 011A}}%
\def\sigma{{\Greekmath 011B}}%
\def\tau{{\Greekmath 011C}}%
\def\upsilon{{\Greekmath 011D}}%
\def\phi{{\Greekmath 011E}}%
\def\chi{{\Greekmath 011F}}%
\def\psi{{\Greekmath 0120}}%
\def\omega{{\Greekmath 0121}}%
\def\varepsilon{{\Greekmath 0122}}%
\def\vartheta{{\Greekmath 0123}}%
\def\varpi{{\Greekmath 0124}}%
\def\varrho{{\Greekmath 0125}}%
\def\varsigma{{\Greekmath 0126}}%
\def\varphi{{\Greekmath 0127}}%
\def\nabla{{\Greekmath 0272}}
\def\FindBoldGroup{%
   {\setbox0=\hbox{$\mathbf{x\global\edef\theboldgroup{\the\mathgroup}}$}}%
}
\def\Greekmath#1#2#3#4{%
    \if@compatibility
        \ifnum\mathgroup=\symbold
           \mathchoice{\mbox{\boldmath$\displaystyle\mathchar"#1#2#3#4$}}%
                      {\mbox{\boldmath$\textstyle\mathchar"#1#2#3#4$}}%
                      {\mbox{\boldmath$\scriptstyle\mathchar"#1#2#3#4$}}%
                      {\mbox{\boldmath$\scriptscriptstyle\mathchar"#1#2#3#4$}}%
        \else
           \mathchar"#1#2#3#4%
        \fi 
    \else 
        \FindBoldGroup
        \ifnum\mathgroup=\theboldgroup 
           \mathchoice{\mbox{\boldmath$\displaystyle\mathchar"#1#2#3#4$}}%
                      {\mbox{\boldmath$\textstyle\mathchar"#1#2#3#4$}}%
                      {\mbox{\boldmath$\scriptstyle\mathchar"#1#2#3#4$}}%
                      {\mbox{\boldmath$\scriptscriptstyle\mathchar"#1#2#3#4$}}%
        \else
           \mathchar"#1#2#3#4%
        \fi     	    
	  \fi}
\newif\ifGreekBold  \GreekBoldfalse
\let\SAVEPBF=\pbf
\def\pbf{\GreekBoldtrue\SAVEPBF}%
  \newcounter{equationnumber}  
  \def\mathletters{%
     \addtocounter{equation}{1}
     \edef\@currentlabel{\theequation}%
     \setcounter{equationnumber}{\c@equation}
     \setcounter{equation}{0}%
     \edef\theequation{\@currentlabel\noexpand\alph{equation}}%
  }
    \def\BibTeX{{\rm B\kern-.05em{\sc i\kern-.025em b}\kern-.08em
                 T\kern-.1667em\lower.7ex\hbox{E}\kern-.125emX}}}{}%
\def\AmS{{\protect\usefont{OMS}{cmsy}{m}{n}%
                A\kern-.1667em\lower.5ex\hbox{M}\kern-.125emS}}}{}%
\def\@@eqncr{\let\@tempa\relax
    \ifcase\@eqcnt \def\@tempa{& & &}\or \def\@tempa{& &}%
      \else \def\@tempa{&}\fi
     \@tempa
     \if@eqnsw
        \iftag@
           \@taggnum
        \else
           \@eqnnum\stepcounter{equation}%
        \fi
     \fi
     \global\tag@false
     \global\@eqnswtrue
     \global\@eqcnt\z@\cr}
\def\TCItag{\@ifnextchar*{\@TCItagstar}{\@TCItag}}
\def\@TCItag#1{%
    \global\tag@true
    \global\def\@taggnum{(#1)}}
\def\@TCItagstar*#1{%
    \global\tag@true
    \global\def\@taggnum{#1}}
\def\ExitTCILatex{\makeatother }
\if@compatibility\message{amsmath already loaded}\fi\aftergroup\ExitTCILatex}
\if@compatibility\message{amstex already loaded}\fi\aftergroup\ExitTCILatex}
\if@compatibility\message{amsgen already loaded}\fi\aftergroup\ExitTCILatex}
\let\DOTSI\relax
\def\RIfM@{\relax\ifmmode}%
\def\FN@{\futurelet\next}%
\def\iint{\DOTSI\intno@\tw@\FN@\ints@}%
\def\iiint{\DOTSI\intno@\thr@@\FN@\ints@}%
\def\iiiint{\DOTSI\intno@4 \FN@\ints@}%
\def\idotsint{\DOTSI\intno@\z@\FN@\ints@}%
\def\ints@{\findlimits@\ints@@}%
\newif\iflimtoken@
\newif\iflimits@
\def\findlimits@{\limtoken@true\ifx\next\limits\limits@true
 \else\ifx\next\nolimits\limits@false\else
 \limtoken@false\ifx\ilimits@\nolimits\limits@false\else
 \ifinner\limits@false\else\limits@true\fi\fi\fi\fi}%
\def\multint@{\int\ifnum\intno@=\z@\intdots@                          
 \else\intkern@\fi                                                    
 \ifnum\intno@>\tw@\int\intkern@\fi                                   
 \ifnum\intno@>\thr@@\int\intkern@\fi                                 
 \int}
\def\multintlimits@{\intop\ifnum\intno@=\z@\intdots@\else\intkern@\fi
 \ifnum\intno@>\tw@\intop\intkern@\fi
 \ifnum\intno@>\thr@@\intop\intkern@\fi\intop}%
\def\intic@{%
    \mathchoice{\hskip.5em}{\hskip.4em}{\hskip.4em}{\hskip.4em}}%
\def\negintic@{\mathchoice
 {\hskip-.5em}{\hskip-.4em}{\hskip-.4em}{\hskip-.4em}}%
\def\ints@@{\iflimtoken@                                              
 \def\ints@@@{\iflimits@\negintic@
   \mathop{\intic@\multintlimits@}\limits                             
  \else\multint@\nolimits\fi                                          
  \eat@}
 \else                                                                
 \def\ints@@@{\iflimits@\negintic@
  \mathop{\intic@\multintlimits@}\limits\else
  \multint@\nolimits\fi}\fi\ints@@@}%
\def\intkern@{\mathchoice{\!\!\!}{\!\!}{\!\!}{\!\!}}%
\def\plaincdots@{\mathinner{\cdotp\cdotp\cdotp}}%
\def\intdots@{\mathchoice{\plaincdots@}%
 {{\cdotp}\mkern1.5mu{\cdotp}\mkern1.5mu{\cdotp}}%
 {{\cdotp}\mkern1mu{\cdotp}\mkern1mu{\cdotp}}%
 {{\cdotp}\mkern1mu{\cdotp}\mkern1mu{\cdotp}}}%
\def\RIfM@{\relax\protect\ifmmode}
\def\text{\RIfM@\expandafter\text@\else\expandafter\mbox\fi}
\let\nfss@text\text
\def\text@#1{\mathchoice
   {\textdef@\displaystyle\f@size{#1}}%
   {\textdef@\textstyle\tf@size{\firstchoice@false #1}}%
   {\textdef@\textstyle\sf@size{\firstchoice@false #1}}%
   {\textdef@\textstyle \ssf@size{\firstchoice@false #1}}%
   \glb@settings}
\def\textdef@#1#2#3{\hbox{{%
                    \everymath{#1}%
                    \let\f@size#2\selectfont
                    #3}}}
\newif\iffirstchoice@
\def\Let@{\relax\iffalse{\fi\let\\=\cr\iffalse}\fi}%
\def\vspace@{\def\vspace##1{\crcr\noalign{\vskip##1\relax}}}%
\def\multilimits@{\bgroup\vspace@\Let@
 \baselineskip\fontdimen10 \scriptfont\tw@
 \advance\baselineskip\fontdimen12 \scriptfont\tw@
 \lineskip\thr@@\fontdimen8 \scriptfont\thr@@
 \lineskiplimit\lineskip
 \vbox\bgroup\ialign\bgroup\hfil$\m@th\scriptstyle{##}$\hfil\crcr}%
\def\Sb{_\multilimits@}%
\def\endSb{\crcr\egroup\egroup\egroup}%
\def\Sp{^\multilimits@}%
\newdimen\ex@
\def\rightarrowfill@#1{$#1\m@th\mathord-\mkern-6mu\cleaders
 \hbox{$#1\mkern-2mu\mathord-\mkern-2mu$}\hfill
 \mkern-6mu\mathord\rightarrow$}%
\def\leftarrowfill@#1{$#1\m@th\mathord\leftarrow\mkern-6mu\cleaders
 \hbox{$#1\mkern-2mu\mathord-\mkern-2mu$}\hfill\mkern-6mu\mathord-$}%
\def\leftrightarrowfill@#1{$#1\m@th\mathord\leftarrow
\mkern-6mu\cleaders
 \hbox{$#1\mkern-2mu\mathord-\mkern-2mu$}\hfill
 \mkern-6mu\mathord\rightarrow$}%
\def\overrightarrow{\mathpalette\overrightarrow@}%
\def\overrightarrow@#1#2{\vbox{\ialign{##\crcr\rightarrowfill@#1\crcr
 \noalign{\kern-\ex@\nointerlineskip}$\m@th\hfil#1#2\hfil$\crcr}}}%
\def\overleftarrow{\mathpalette\overleftarrow@}%
\def\overleftarrow@#1#2{\vbox{\ialign{##\crcr\leftarrowfill@#1\crcr
 \noalign{\kern-\ex@\nointerlineskip}$\m@th\hfil#1#2\hfil$\crcr}}}%
\def\overleftrightarrow{\mathpalette\overleftrightarrow@}%
\def\overleftrightarrow@#1#2{\vbox{\ialign{##\crcr
   \leftrightarrowfill@#1\crcr
 \noalign{\kern-\ex@\nointerlineskip}$\m@th\hfil#1#2\hfil$\crcr}}}%
\def\underrightarrow{\mathpalette\underrightarrow@}%
\def\underrightarrow@#1#2{\vtop{\ialign{##\crcr$\m@th\hfil#1#2\hfil
  $\crcr\noalign{\nointerlineskip}\rightarrowfill@#1\crcr}}}%
\def\underleftarrow{\mathpalette\underleftarrow@}%
\def\underleftarrow@#1#2{\vtop{\ialign{##\crcr$\m@th\hfil#1#2\hfil
  $\crcr\noalign{\nointerlineskip}\leftarrowfill@#1\crcr}}}%
\def\underleftrightarrow{\mathpalette\underleftrightarrow@}%
\def\underleftrightarrow@#1#2{\vtop{\ialign{##\crcr$\m@th
  \hfil#1#2\hfil$\crcr
 \noalign{\nointerlineskip}\leftrightarrowfill@#1\crcr}}}%
\def\qopnamewl@#1{\mathop{\operator@font#1}\nlimits@}
\let\nlimits@\displaylimits
\def\setboxz@h{\setbox\z@\hbox}
\def\varlim@#1#2{\mathop{\vtop{\ialign{##\crcr
 \hfil$#1\m@th\operator@font lim$\hfil\crcr
 \noalign{\nointerlineskip}#2#1\crcr
 \noalign{\nointerlineskip\kern-\ex@}\crcr}}}}
 \def\rightarrowfill@#1{\m@th\setboxz@h{$#1-$}\ht\z@\z@
  $#1\copy\z@\mkern-6mu\cleaders
  \hbox{$#1\mkern-2mu\box\z@\mkern-2mu$}\hfill
  \mkern-6mu\mathord\rightarrow$}
\def\leftarrowfill@#1{\m@th\setboxz@h{$#1-$}\ht\z@\z@
  $#1\mathord\leftarrow\mkern-6mu\cleaders
  \hbox{$#1\mkern-2mu\copy\z@\mkern-2mu$}\hfill
  \mkern-6mu\box\z@$}
\def\projlim{\qopnamewl@{proj\,lim}}
\def\injlim{\qopnamewl@{inj\,lim}}
\def\varinjlim{\mathpalette\varlim@\rightarrowfill@}
\def\varprojlim{\mathpalette\varlim@\leftarrowfill@}
\def\varliminf{\mathpalette\varliminf@{}}
\def\varliminf@#1{\mathop{\underline{\vrule\@depth.2\ex@\@width\z@
   \hbox{$#1\m@th\operator@font lim$}}}}
\def\varlimsup{\mathpalette\varlimsup@{}}
\def\varlimsup@#1{\mathop{\overline
  {\hbox{$#1\m@th\operator@font lim$}}}}
\def\align{\@verbatim \frenchspacing\@vobeyspaces \@alignverbatim
You are using the "align" environment in a style in which it is not defined.}
\let\csname endalign*\endcsname =\endtrivlist
\def\alignat{\@verbatim \frenchspacing\@vobeyspaces \@alignatverbatim
You are using the "alignat" environment in a style in which it is not defined.}
\let\csname endalignat*\endcsname =\endtrivlist
\def\xalignat{\@verbatim \frenchspacing\@vobeyspaces \@xalignatverbatim
You are using the "xalignat" environment in a style in which it is not defined.}
\let\csname endxalignat*\endcsname =\endtrivlist
\def\gather{\@verbatim \frenchspacing\@vobeyspaces \@gatherverbatim
You are using the "gather" environment in a style in which it is not defined.}
\let\csname endgather*\endcsname =\endtrivlist
\def\multiline{\@verbatim \frenchspacing\@vobeyspaces \@multilineverbatim
You are using the "multiline" environment in a style in which it is not defined.}
\let\csname endmultiline*\endcsname =\endtrivlist
\def\arrax{\@verbatim \frenchspacing\@vobeyspaces \@arraxverbatim
You are using a type of "array" construct that is only allowed in AmS-LaTeX.}
\def\tabulax{\@verbatim \frenchspacing\@vobeyspaces \@tabulaxverbatim
You are using a type of "tabular" construct that is only allowed in AmS-LaTeX.}
\let\csname endarrax*\endcsname =\endtrivlist
\let\csname endtabulax*\endcsname =\endtrivlist
 \def\endequation{%
     \ifmmode\ifinner 
      \iftag@
        \addtocounter{equation}{-1} 
        $\hfil
           \displaywidth\linewidth\@taggnum\egroup \endtrivlist
        \global\tag@false
        \global\@ignoretrue   
      \else
        $\hfil
           \displaywidth\linewidth\@eqnnum\egroup \endtrivlist
        \global\tag@false
        \global\@ignoretrue 
      \fi
     \else   
      \iftag@
        \addtocounter{equation}{-1} 
        \eqno \hbox{\@taggnum}
        \global\tag@false%
        $$\global\@ignoretrue
      \else
        \eqno \hbox{\@eqnnum}
        $$\global\@ignoretrue
      \fi
     \fi\fi
 } 
 \newif\iftag@ \tag@false
 \def\TCItag{\@ifnextchar*{\@TCItagstar}{\@TCItag}}
 \def\@TCItag#1{%
     \global\tag@true
     \global\def\@taggnum{(#1)}}
 \def\@TCItagstar*#1{%
     \global\tag@true
     \global\def\@taggnum{#1}}
     \def\tag{\@ifnextchar*{\@tagstar}{\@tag}}
     \def\@tag#1{%
         \global\tag@true
         \global\def\@taggnum{(#1)}}
     \def\@tagstar*#1{%
         \global\tag@true
         \global\def\@taggnum{#1}}
\begin{document}

\title{Excess risk bounds for multitask learning with trace norm
regularization}
\author{Andreas Maurer \\
Adalbertstr. 55, D-80799 M\"{u}nchen, Germany \\
\emph{am@andreas-maurer.eu} \and Massimiliano Pontil \\
Department of Computer Science \\
University College London \\
Malet Place London WC1E, UK\\
\emph{m.pontil@cs.ucl.ac.uk} }
\maketitle

\begin{abstract}
Trace norm regularization is a popular method of multitask learning. We give
excess risk bounds with explicit dependence on the number of tasks, \ the
number of examples per task and properties of the data distribution. The
bounds are independent of the dimension of the input space, which may be
infinite as in the case of reproducing kernel Hilbert spaces. A byproduct of
the proof are bounds on the expected norm of sums of random positive
semidefinite matrices with subexponential moments.
\end{abstract}

\section{Introduction}

A fundamental limitation of supervised learning is the cost incurred by the
preparation of the large training samples required for good generalization.
A potential remedy is offered by multi-task learning: in many cases, while
individual sample sizes are rather small, there are samples to represent a
large number of learning tasks, which share some constraining or generative
property. This common property can be estimated using the entire collection
of training samples, and if this property is sufficiently simple it should
allow better estimation of the individual tasks from small individual
samples. 

The machine learning community has tried multi-task learning for many years
(see \cite{Zhang 2005,AEP,Caruana 1998,Cesa,Evgeniou 2005,Lounici 2011,Maurer 2005,Thrun 1998}, contributions and references
therein), but there are few theoretical investigations which clearly expose
the conditions under which multi-task learning is preferable to independent
learning. Following the seminal work of Baxter (\cite{Baxter 1998,Baxter 2000}) several authors have given generalization and performance
bounds under different assumptions of task-relatedness. In this paper we
consider multi-task learning with trace-norm regularization (TNML), a
technique for which efficient algorithms exist and which has been
successfully applied many times (see e.g. \cite{Amit 2007,AEP,Evgeniou
2005,Harchaoui}).\bigskip 

In the learning framework considered here the inputs live in a separable
Hilbert space $H$, which may be finite or infinite dimensional, and the
outputs are real numbers. For each of $T$ tasks an unknown input-output
relationship is modeled by a distribution $\mu _{t}$ on $H\times \mathcal{%
\mathbb{R}
}$ , with $\mu _{t}\left( X,Y\right) $ being interpreted as the probability
of observing the input-output pair $\left( X,Y\right) $. We assume bounded
inputs, for simplicity $\left\Vert X\right\Vert \leq 1$, where we use $%
\left\Vert \cdot\right\Vert $ and $\left\langle \cdot,\cdot\right\rangle $ to denote
euclidean norm and inner product in $H$ respectively.

A predictor is specified by a weight vector $w\in H$ which predicts
the output $\left\langle w,x\right\rangle $ for an observed input $x\in H$.
If the observed output is $y$ a loss $\ell \left( \left\langle
w,x\right\rangle ,y\right) $ is incurred, where $\ell $ is a fixed loss
function on $%
\mathbb{R}
^{2}$, assumed to have values in $\left[ 0,1\right] $, with $\ell \left(
\cdot ,y\right) $ being Lipschitz with constant $L$ for each $y\in \mathcal{%
\mathbb{R}
}$. The expected loss or risk of weight vector $w$ in the context of task $t$
is thus%
\begin{equation*}
R_{t}\left( w\right) =\mathbb{E}_{\left( X,Y\right) \sim \mu _{t}}\left[
\ell \left( \left\langle w,X\right\rangle ,Y\right) \right] .
\end{equation*}%
The choice of a weight vector $w_{t}$ for each task $t$ is equivalent to
the choice of a linear map $W:H\rightarrow 
\mathbb{R}
^{T}$, with $\left( Wx\right) _{t}=\left\langle x,w_{t}\right\rangle $. We
seek to choose $W$ so as to (nearly) minimize the total average risk $%
R\left( W\right) $ defined by%
\begin{equation*}
R\left( W\right) =\frac{1}{T}\sum_{t=1}^{T}\mathbb{E}_{\left( X,Y\right)
\sim \mu _{t}}\left[ \ell \left( \left\langle w_{t},X\right\rangle ,Y\right) %
\right] .
\end{equation*}

Since the $\mu _{t}$ are unknown, the minimization is based on a finite
sample of observations, which for each task $t$ is modelled by a vector $%
\mathbf{Z}^{t}$ of $n$ independent random variables $\mathbf{Z}^{t}=\left(
Z_{1}^{t},\dots,Z_{n}^{t}\right) $, where each $Z_{i}^{t}=\left(
X_{i}^{t},Y_{i}^{t}\right) $ is distributed according to $\mu _{t}$. For
most of this paper we make the simplifying assumption that all the samples
have the same size $n$. With an appropriate modification of the algorithm
defined below this assumption can be removed (see Remark \ref{Remark
different sample sizes} below). In a similar way the definition of $R\left(
W\right) $ can be replaced by a weighted average which attribute greater
weight to tasks which are considered more important. The entire multi-sample 
$\left( \mathbf{Z}^{1},\dots,\mathbf{Z}^{T}\right) $ is denoted by $\mathbf{%
\bar{Z}}$.

A classical and intuitive learning strategy is empirical risk minimization.
One decides on a constraint set $\mathcal{W\subseteq L}\left( H,%
\mathbb{R}
^{T}\right) $ for candidate maps and solves the problem 
\begin{equation*}
\hat{W}\left( \mathbf{\bar{Z}}\right) =\arg \min_{W\in \mathcal{W}}\hat{R}%
\left( W,\mathbf{\bar{Z}}\right) ,
\end{equation*}%
where the average empirical risk $\hat{R}\left( W,\mathbf{\bar{Z}}\right) $
is defined as%
\begin{equation*}
\hat{R}\left( W,\mathbf{\bar{Z}}\right) =\frac{1}{T}\sum_{t=1}^{T}\frac{1}{n}%
\sum_{i=1}^{n}\ell \left( \left\langle w_{t},X_{i}^{t}\right\rangle
,Y_{i}^{t}\right) .
\end{equation*}%
If the candidate set $\mathcal{W}$ has the form $\mathcal{W=}\left\{
x\mapsto Wx:\left( Wx\right) _{t}=\left\langle x,w_{t}\right\rangle
,w_{t}\in \mathcal{B}\right\} $ where $\mathcal{B}\subseteq H$ 
is some candidate set of vectors, then this is equivalent to
single task learning, solving for each task the problem%
\begin{equation*}
w_{t}\left( \mathbf{Z}_{t}\right) =\arg \min_{w\in \mathcal{B}}\frac{1}{n}%
\sum_{i=1}^{n}\ell \left( \left\langle w,X_{i}^{t}\right\rangle
,Y_{i}^{t}\right) .
\end{equation*}%
For proper multi-task learning the set $\mathcal{W}$ is chosen such that for
a map $W$ membership in $\mathcal{W}$ implies some mutual dependence between the 
vectors $w_{t}$.

A good candidate set $\mathcal{W}$ must fulfill two requirements: it must be
large enough to contain maps with low risk and small enough that we can find
such maps from a finite number of examples. The first requirement means that
the risk of the best map $W^{\ast }$ in the set, 
\begin{equation*}
W^{\ast }=\arg \min_{W\in \mathcal{W}}R\left( W\right) ,
\end{equation*}%
is small. This depends on the set of tasks at hand and is largely a matter
of domain knowledge. The second requirement is that the risk of the operator
which we find by empirical risk minimization, $\hat{W}\left( \mathbf{\bar{Z}}%
\right) $, is not too different from the risk of $W^{\ast }$, so that the
excess risk 
\begin{equation*}
R\left( \hat{W}\left( \mathbf{\bar{Z}}\right) \right) -R\left( W^{\ast
}\right) 
\end{equation*}%
is small. Bounds on this quantity are the subject of this paper, and, as $%
R\left( \hat{W}\left( \mathbf{\bar{Z}}\right) \right) $ is a random
variable, they can only be expected to hold with a certain probability.

For multitask learning with trace-norm regularization (TNML) we suppose that 
$\mathcal{W}$ is defined in terms of the trace-norm%
\begin{equation}
\mathcal{W=}\left\{ W\in 
\mathbb{R}
^{dT}:\left\Vert W\right\Vert _{1}\leq B\sqrt{T}\right\} ,
\label{Main constraint}
\end{equation}%
where $\left\Vert W\right\Vert _{1}=tr\left( \left( W^{\ast }W\right)
^{1/2}\right) $ and $B>0$ is a regularization constant. The factor $\sqrt{T}$
is an important normalization which we explain below. We will prove

\begin{theorem}
\label{Theorem Main}(i) For $\delta >0$ with probability at least $1-\delta $
in $\mathbf{\bar{Z}}$%
\begin{equation*}
R\left( \hat{W}\right) -R\left( W^{\ast }\right) \leq 2LB\left( \sqrt{\frac{%
\left\Vert C\right\Vert _{\infty }}{n}}+5\sqrt{\frac{\ln \left( nT\right) +1%
}{nT}}\right) +\sqrt{\frac{2\ln \left( 2/\delta \right) }{nT}},
\end{equation*}%
where $\left\Vert .\right\Vert _{\infty }$ is the operator, or spectral
norm, and $C$ is the task averaged, uncentered data covariance operator 
\begin{equation*}
\left\langle Cv,w\right\rangle =\frac{1}{T}\sum_{t=1}^{T}\mathbb{E}_{\left(
X,Y\right) \sim \mu _{t}}\left\langle v,X\right\rangle \left\langle
X,w\right\rangle \text{, for }w,v\in H.
\end{equation*}%
(ii) Also with probability $1-\delta $ in $\mathbf{\bar{Z}}$%
\begin{equation*}
R\left( \hat{W}\right) -R\left( W^{\ast }\right) \leq 2LB\left( \sqrt{\frac{%
\left\Vert \hat{C}\right\Vert _{\infty }}{n}}+\sqrt{\frac{2\left( \ln \left(
nT\right) +1\right) }{nT}}\right) +\sqrt{\frac{8\ln \left( 3/\delta \right) 
}{nT}},
\end{equation*}%
with $\hat{C}$ being the task averaged, uncentered empirical covariance
operator%
\begin{equation*}
\left\langle \hat{C}v,w\right\rangle =\frac{1}{nT}\sum_{t=1}^{T}%
\sum_{i=1}^{n}\left\langle v,X_{i}^{t}\right\rangle \left\langle
X_{i}^{t},w\right\rangle \text{, for }w,v\in H.
\end{equation*}
\end{theorem}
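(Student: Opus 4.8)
The plan is to run the textbook excess-risk argument through uniform deviations, funnelling the trace-norm geometry into a single Rademacher bound and pushing the genuine work into a concentration inequality for sums of random positive semidefinite operators. I would begin with the decomposition
\[
R(\hat{W}) - R(W^{*}) \;\le\; \sup_{W \in \mathcal{W}}\bigl(R(W) - \hat{R}(W)\bigr) \;+\; \bigl(\hat{R}(W^{*}) - R(W^{*})\bigr),
\]
valid because $\hat W$ minimizes $\hat R$ over $\mathcal W \ni W^{*}$. The second, single-operator term is handled by Hoeffding's inequality (the loss takes values in $[0,1]$ and $\hat R(W^{*})$ is an average of $nT$ independent terms), contributing an $O(\sqrt{\ln(1/\delta)/(nT)})$ summand. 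The first term is reduced to its expectation by the bounded-differences inequality (one coordinate change moves $\hat R$ by at most $1/(nT)$), and then symmetrized: $\mathbb E\sup_{W}(R(W)-\hat R(W)) \le 2\,\mathbb E_{\bar Z,\sigma}\sup_{W}\frac1{nT}\sum_{t,i}\sigma_i^{t}\ell(\langle w_t, X_i^{t}\rangle, Y_i^{t})$. Since each $\ell(\cdot,y)$ is $L$-Lipschitz, the contraction lemma replaces the loss by $\langle w_t, X_i^{t}\rangle$ at the price of a factor $L$.

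The trace norm now enters through duality. Setting $v_t := \sum_{i=1}^{n}\sigma_i^{t}X_i^{t}$ and defining $V_\sigma : H \to \mathbb R^{T}$ by $(V_\sigma x)_t = \langle v_t, x\rangle$, one has $\sum_{t,i}\sigma_i^{t}\langle w_t, X_i^{t}\rangle = \sum_t\langle w_t, v_t\rangle = \langle W, V_\sigma\rangle_{\mathrm{HS}}$, so the supremum over $\|W\|_1 \le B\sqrt T$ equals $B\sqrt T\,\|V_\sigma\|_\infty$ by trace-norm/operator-norm duality; hence the Rademacher term is exactly $\frac{2LB}{n\sqrt T}\,\mathbb E_{\bar Z,\sigma}\|V_\sigma\|_\infty$. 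Writing $\|V_\sigma\|_\infty = \|V_\sigma^{*}V_\sigma\|_\infty^{1/2}$ and observing that $V_\sigma^{*}V_\sigma = \sum_{t=1}^{T} v_t\otimes v_t$ is a sum of $T$ \emph{independent} positive semidefinite operators (the $t$-th summand depends only on the Rademacher block $(\sigma_i^{t})_i$, and for part (i) only on $\mathbf Z^{t}$ as well), with $\mathbb E_\sigma \sum_t v_t\otimes v_t = \sum_{t,i}X_i^{t}\otimes X_i^{t} = nT\hat C$ (and $nTC$ after also averaging over the data), Jensen's inequality gives $\mathbb E\|V_\sigma\|_\infty \le \bigl(\mathbb E\|\sum_t v_t\otimes v_t\|_\infty\bigr)^{1/2}$.

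The heart of the matter is to show $\mathbb E\|\sum_t v_t\otimes v_t\|_\infty \le \|\mathbb E\sum_t v_t\otimes v_t\|_\infty + O\bigl(n(\ln(nT)+1)\bigr)$. This is where the announced matrix inequality is used: for independent positive semidefinite summands $Y_t$ one has a Bernstein-type estimate $\mathbb E\|\sum_t Y_t\|_\infty \le \|\sum_t\mathbb E Y_t\|_\infty + c_1\sqrt{\|\sum_t\mathbb E Y_t\|_\infty\, r(\ln d+1)} + c_2\,r(\ln d+1)$, where $d$ is the rank of the subspace supporting the sum --- here $d \le nT$, because all the $v_t$ lie in $\mathrm{span}\{X_i^{t}\}$, and this is precisely what makes the final bound independent of $\dim H$ --- and $r$ is a subexponential scale for the random variables $\|Y_t\|_\infty$. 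With $Y_t = v_t\otimes v_t$ we have $\|Y_t\|_\infty = \|\sum_i \sigma_i^{t}X_i^{t}\|^{2}$, whose moments obey a Kahane--Khintchine bound $(\mathbb E\|\sum_i \sigma_i^{t}X_i^{t}\|^{2k})^{1/k} \le c\,k\sum_i\|X_i^{t}\|^{2} \le c\,kn$, so $r = O(n)$ while the variance proxy $\mathbb E\|Y_t\|_\infty \le n$. Feeding this in and collapsing the Bernstein bound via $(\sqrt\mu+\sqrt s)^{2} = \mu + 2\sqrt{\mu s} + s$ yields $\mathbb E\|V_\sigma\|_\infty \le \sqrt{nT\|\hat C\|_\infty} + \sqrt{2n(\ln(nT)+1)}$, exactly the required shape. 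Assembling the pieces: for part (ii) one uses the data-dependent Rademacher route --- two bounded-differences steps plus the Hoeffding step, whence the factor $3$ in $\ln(3/\delta)$ --- which keeps $\hat C$; for part (i) one takes the expectation over $\bar Z$ before invoking the lemma, producing $C$ directly (with somewhat larger absolute constants) and needing only two high-probability events, whence $\ln(2/\delta)$.

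I expect the positive-semidefinite matrix concentration lemma to be the real obstacle. Two things must be gotten right: the dimension must appear only through the rank of $\sum_t v_t\otimes v_t$ --- not through $\dim H$, which may be infinite --- and the constants must be sharp enough that the cross term $\sqrt{\mu\, r(\ln d+1)}$ can be absorbed \emph{without} degrading the leading $\sqrt{\|\hat C\|_\infty/n}$ term. The natural tools are a trace-exponential (Ahlswede--Winter / Golden--Thompson) argument fed by the subexponential moment bounds above, or equivalently a diagonal/off-diagonal splitting of the quadratic form $\sum_{t,i,j}\sigma_i^{t}\sigma_j^{t}X_i^{t}\otimes X_j^{t}$ --- whose diagonal is the deterministic $nT\hat C$ --- followed by decoupling and a matrix Khintchine inequality for the off-diagonal chaos. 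The remaining ingredients (symmetrization, contraction, the norm duality, Hoeffding, bounded differences) are entirely routine.
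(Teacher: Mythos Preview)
Your overall architecture matches the paper's: the same excess-risk decomposition, Hoeffding for the $W^{*}$ term, symmetrization plus contraction, trace-norm/operator-norm duality to reduce to $\mathbb{E}\bigl\|\sum_t v_t\otimes v_t\bigr\|_\infty^{1/2}$, and a subexponential matrix concentration step yielding the bound for part (ii). Two points of divergence deserve comment.

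First, the moment input. The paper does not use Kahane--Khintchine on the scalar $\|v_t\|^{2}$; it proves an \emph{operator} inequality $\mathbb{E}_\sigma(v_t\otimes v_t)^m \preceq m!\,(2n)^{m-1}\,\mathbb{E}_\sigma(v_t\otimes v_t)$ via a combinatorial Rademacher-chaos argument (counting index sequences in which every index occurs an even number of times). This is precisely the hypothesis its subexponential matrix-concentration theorem consumes, applied \emph{conditionally on the data} with the deterministic subspace $M=\mathrm{span}\{X_i^{t}\}$ of dimension at most $nT$. Your scalar-moment route can be made to work with a different matrix-Bernstein variant, but note that the bound $\mathbb{E}\|v_t\|^{2m}\le c^m m!\,n^{m}$ does not by itself imply the operator inequality the paper's tool needs.

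Second, and more substantively, your plan for part (i) is not the paper's and has a gap as stated. You propose to take the data expectation \emph{before} invoking the matrix lemma so that $C$ appears directly. But the Ahlswede--Winter/Tropp trace-exponential argument you describe needs a \emph{fixed} finite-dimensional subspace containing all ranges; once you integrate over $\bar Z$ there is no such subspace when $H$ is infinite-dimensional. The Oliveira-style trick that replaces the ambient dimension by the almost-sure rank of the realized sum is proved in the paper only for \emph{bounded} summands ($0\preceq A_k\preceq I$), not for the subexponential $v_t\otimes v_t$. The paper therefore proceeds in two stages: apply the subexponential result conditionally on the data (fixed $M$, $\dim M\le nT$) to obtain the $\hat C$ bound of part (ii); then take $\mathbb{E}_{\bar Z}$ and bound $\sqrt{\mathbb{E}\|\hat C\|_\infty}$ by $\sqrt{\|C\|_\infty}+O\bigl(\sqrt{\ln(nT)/(nT)}\bigr)$ via the bounded-summand, random-subspace theorem applied to the rank-one operators $X_i^{t}\otimes X_i^{t}$ (which do satisfy $0\preceq\cdot\preceq I$). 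That second step is where the larger constant $5$ in part (i) comes from. If you insist on reaching $C$ in one shot you would need a hybrid result combining subexponential tails with an almost-sure rank condition, which is not among the tools you list.
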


\noindent Remarks:
\begin{enumerate}
\item The first bound is distribution dependent, the second data-dependent.

\item Suppose that for an operator $W$ all $T$ column vectors $w_{t}$ are
equal to a common vector $w$, as might be the case if all the tasks $T$ are
equivalent. In this case increasing the number of tasks should not increase
the regularizer. Since then $\left\Vert W\right\Vert _{1}=\sqrt{T}\left\Vert
w\right\Vert $ we have chosen the factor $\sqrt{T}$ in (\ref{Main constraint}%
). It allows us to consider the limit $T\rightarrow \infty $ for a fixed
value of $B$.

\item In the limit $T\rightarrow \infty $ the bounds become 
\begin{equation*}
2LB\sqrt{\frac{\left\Vert C\right\Vert _{\infty }}{n}}\text{ or }2LB\sqrt{%
\frac{\left\Vert \hat{C}\right\Vert _{\infty }}{n}}\text{ respectively.}
\end{equation*}%
The limit is finite and it is approached at a rate of $\sqrt{\ln \left(
T\right) /T}$.

\item If the mixture of data distributions is supported on a one dimensional
subspace then $\left\Vert C\right\Vert _{\infty }=\mathbb{E}\left\Vert
X\right\Vert ^{2}$ and the bound is always worse than standard bounds for
single task learning as in \cite{Bartlett 2002}. The situation is similar if
the distribution is supported on a very low dimensional subspace. Thus, if
learning is already easy, TNML will bring no benefit.

\item If the mixture of data distributions is uniform on an $M$-dimensional
unit sphere in $H$ then $\left\Vert C\right\Vert _{\infty }=1/M$ and the
corresponding term in the bound becomes small. Suppose now that for $%
W=\left( w_{1},\dots,w_{T}\right) $ the $w_{t}$ all are constrained to be unit
vectors lying in some $K$-dimensional subspace of $H$, as might be the
solution returned by a method of subspace learning \cite{Zhang 2005}. If we
choose $B=K^{1/2}$ then $W\in \mathcal{W}$, and our bound also applies. This
subspace corresponds to the property shared shared among the tasks. The cost
of its estimation vanishes in the limit $T\rightarrow \infty $ and the bound
becomes%
\begin{equation*}
2L\sqrt{\frac{K}{nM}}.
\end{equation*}%
$K$ is proportional to the number of bits needed to communicate the utilized
component of an input vector, given knowledge of the common subspace. $M$ is
proportional to the number of bits to communicate an entire input vector. In
this sense the quantity $K/M$ can be interpreted as the ratio of the
utilized information $K$ to the available information $M$, as in \cite%
{Maurer 2006}. If $T$ and $M$ are large and $K$ is small the excess risk can
be very small even for small sample sizes $m$. Thus, if learning is
difficult (due to data of intrinsically high dimension) and the
approximation error is small, then TNML is superior to single task learning.

\item An important example of the infinite dimensional case is given when $H$
is the reproducing kernel Hilbert space $H_{\kappa }$ generated by a
positive semidefinite kernel $\kappa :Z\times Z\rightarrow {\mathbb{R}}$
where $Z$ is a set of inputs. This setting is important because it allows to
learn large classes of nonlinear functions. By the representer theorem for
matrix regularizers \cite{AMP} empirical risk minimization within the
hypothesis space $\mathcal{W}$ reduces to a finite dimensional problem in $%
nT^{2}$ variables. 

\item \label{Remark different sample sizes}The assumption of equal sample
sizes for all tasks is often violated in practice. Let $n_{t}$ be the number
of examples available for the $t$-th task. The resulting imbalance can be
compensated by a modification of the regularizer, replacing $\left\Vert
W\right\Vert _{1}$ by a weighted trace norm $\left\Vert SW\right\Vert _{1}$,
where the diagonal matrix $S=\left( s_{1},\dots,s_{T}\right) $ weights the $t$%
-th task with%
\begin{equation*}
s_{t}=\sqrt{\frac{1}{n_{t}T}\sum_{r}n_{r}}.
\end{equation*}%
where $n_{t}$ is the size of the sample available for the $t$-th task. With
this modification the Theorem holds with the average sample size $\bar{n}%
=\left( 1/T\right) \sum n_{t}$ in place of $n$. In Section \ref{Section
Proof of theorem 1} we will prove this result, which then reduces to Theorem %
\ref{Theorem Main} when all the sample sizes are equal.
\end{enumerate}

The proof of Theorem \ref{Theorem Main} is based on the well established
method of Rademacher averages \cite{Bartlett 2002}\ and more recent advances
on tail bounds for sums of random matrices, drawing heavily on the work of
Ahlswede and Winter \cite{Ahlswede Winter}, Oliveira \cite{Oliveira 2010}
and Tropp \cite{Tropp 2010}. In this context two auxiliary results are
established (Theorem \ref{Theorem Main Tool} and Theorem \ref{Theorem
Oliveira} below), which may be of independent interest.

\section{Earlier work.\label{Section earlier work}}

The foundations to a theoretical understanding of multi-task learning were
laid by J. Baxter in \cite{Baxter 2000}, where covering numbers are used to
expose the potential benefits of multi-task and transfer learning. In \cite%
{Zhang 2005} Rademacher averages are used to give excess risk bounds for a
method of multi-task subspace learning. Similar results are obtained in \cite%
{Maurer 2005}. \cite{Ben-David 2003} uses a special assumption of
task-relatedness to give interesting bounds not on the average, but the
maximal risk over the tasks.

A lot of important work on trace norm regularization concerns matrix
completion, where a matrix is only partially observed and approximated (or
under certain assumptions even reconstructed) by a matrix of small trace
norm (see e.g. \cite{Candes 2009}, \cite{Shamir 2011} and references
therein). For $H=%
\mathbb{R}
^{d}$ and $T\times d$-matrices, this is somewhat related to the situation
considered here, if we identify the tasks with the columns of the matrix in
question, the input marginal as the uniform distribution supported on the
basis vectors of $%
\mathbb{R}
^{d}$ and the outputs as defined by the matrix values themselves, without or
with the addition of noise. One essential difference is that matrix
completion deals with a known and particularly simple input distribution,
which makes it unclear how bounds for matrix completion can be converted to
bounds for multitask learning. On the other hand our bounds cannot be
directly applied to matrix completion, because they assume a fixed number of
revealed entries for each column.

Multitask learning is considered in \cite{Lounici 2011}, where special
assumptions (coordinate-sparsity of the solution, restricted eigenvalues)
are used to derive fast rates and the recovery of shared features. Such
assumptions are absent in this paper, and \cite{Lounici 2011} also considers
a different regularizer.

\cite{Maurer 2006} and \cite{KakadeEtAl 2012} seem to be most closely
related to the present work. In \cite{Maurer 2006} the general form of the
bound is very similar to Theorem \ref{Theorem Main}. The result is dimension
independent, but it falls short of giving the rate of $\sqrt{\ln \left(
T\right) /T}$ in the number of tasks. Instead it gives $T^{-1/4}$.

\cite{KakadeEtAl 2012} introduces a general and elegant method to derive
bounds for learning techniques which employ matrix norms as regularizers.
For $H=%
\mathbb{R}
^{d}$ and applied to multi task learning and the trace-norm a data-dependent
bound is given whose dominant term reads as (omitting constants and
observing $\left\Vert W\right\Vert _{1}\leq B\sqrt{T}$)%
\begin{equation}
LB\sqrt{\max_{i}\left\Vert \hat{C}_{i}\right\Vert _{\infty }\frac{\ln \min
\left\{ T,d\right\} }{n}},  \label{Kakade bound}
\end{equation}%
where the matrix $\hat{C}_{i}$ is the empirical covariance of the data for
all tasks observed in the $i$-th observation%
\begin{equation*}
\hat{C}_{i}v=\frac{1}{T}\sum_{t}\left\langle v,X_{i}^{t}\right\rangle
X_{i}^{t}.
\end{equation*}%
The bound (\ref{Kakade bound}) does not paint a clear picture of the role of
the number of tasks $T$. Using Theorem \ref{Theorem Oliveira} below we can
estimate its expectation and convert it into the distribution dependent
bound with dominant term%
\begin{equation}
LB\sqrt{\ln \min \left\{ T,d\right\} }\left( \sqrt{\frac{\left\Vert
C\right\Vert _{\infty }}{n}}+\sqrt{\frac{6\ln \left( 24nT^{2}\right) +1}{nT}}%
\right) .  \label{Kakade bound distribution dependent}
\end{equation}

This is quite similar to Theorem \ref{Theorem Main} (i). Because (\ref%
{Kakade bound}) is hinged on the $i$-th observation it is unclear how it can
be modified for unequal sample sizes for different tasks. The principal
disadvantage of (\ref{Kakade bound}) however is that it diverges in the
simultaneous limit $d,T\rightarrow \infty $.

\section{Notation and Tools\label{Section Notation and Tools}}

The letters $H$, $H^{\prime }$, $H^{\prime \prime }$ will denote finite or
infinite dimensional separable real Hilbert spaces. 

For a linear map $A:H\rightarrow H^{\prime }$ we denote the adjoint with $%
A^{\ast }$, the range by $Ran\left( A\right) $ and the null space by $%
Ker\left( A\right) $. $A$ is called compact if the image of the open unit
ball of $H$ under $A$ is pre-compact (totally bounded) in $H^{\prime }$. If $%
Ran\left( H\right) $ is finite dimensional then $A$ is compact, finite
linear combinations of compact linear maps and products with bounded linear
maps are compact. A linear map $A:H\rightarrow H$ is called an operator and
self-adjoint if $A^{\ast }=A$ and nonnegative (or positive) if it is
self-adjoint and $\left\langle Ax,x\right\rangle \geq 0$ (or $\left\langle
Ax,x\right\rangle >0$) for all $x\in H$, $x\neq 0$, in which case we write $%
A\succeq 0\,$(or $A\succ 0$). We use "$\preceq $" to denote the order
induced by the cone of nonnegative operators.

For linear $A:H\rightarrow H^{\prime }$ and $B:H^{\prime }\rightarrow
H^{\prime \prime }$ the product $BA:H\rightarrow H^{\prime \prime }$ is
defined by $\left( BA\right) x=B\left( Ax\right) $. Then $A^{\ast
}A:H\rightarrow H$ is always a nonnegative operator. We use $\left\Vert
A\right\Vert _{\infty }$ for the norm $\left\Vert A\right\Vert _{\infty
}=\sup \left\{ \left\Vert Ax\right\Vert :\left\Vert x\right\Vert \leq
1\right\} $. We generally assume $\left\Vert A\right\Vert _{\infty }<\infty $%
. 

If $A$ is a compact and self-adjoint operator then there exists an
orthonormal basis $e_{i}$ of $H$ and real numbers $\lambda _{i}$ satisfying $%
\left\vert \lambda _{i}\right\vert \rightarrow 0$ such that 
\begin{equation*}
A=\sum_{i}\lambda _{i}Q_{e_{i}},
\end{equation*}%
where $Q_{e_{i}}$ is the operator defined by $Q_{e_{i}}x=\left\langle
x,e_{i}\right\rangle e_{i}$. The $e_{i}$ are eigenvectors and the $\lambda
_{i}$ eigenvalues of $A$. If $f$ is a real function defined on a set
containing all the $\lambda _{i}$ a self-adjoint operator $f\left( A\right) $
is defined by 
\begin{equation*}
f\left( A\right) =\sum_{i}f\left( \lambda _{i}\right) Q_{e_{i}}.
\end{equation*}%
$f\left( A\right) $ has the same eigenvectors as $A$ and eigenvalues $%
f\left( A\right) $. In the sequel self-adjoint operators are assumed to be
either compact or of the form $f\left( A\right) $ with $A$ compact (we will
encounter no others), so that there always exists a basis of eigenvectors. A
self-adjoint operator is nonnegative (positive) if all its eigenvalues are
nonnegative (positive). If $A$ is positive then $\ln \left( A\right) $
exists and has the property $\ln \left( A\right) \preceq \ln \left( B\right) 
$ whenever $B$ is positive and $A\preceq B$. This property of operator
monotonicity will be tacitly used in the sequel.

We write $\lambda _{\max }\left( A\right) $ for the largest eigenvalue (if
it exists), and for nonnegative operators $\lambda _{\max }\left( \cdot\right) $
always exists and coincides with the norm $\left\Vert \cdot\right\Vert _{\infty }
$. 

A linear subspace $M\subseteq H$ is called invariant under $A$ if $%
AM\subseteq M$. For a linear subspace $M\subseteq H$ we use $M^{\perp }$ to
denote the orthogonal complement $M^{\perp }=\left\{ x\in H:\left\langle
x,y\right\rangle =0,\forall y\in M\right\} $. For a selfadjoint operator $%
Ran\left( A\right) ^{\perp }=Ker\left( A\right) $. For a self-adjoint
operator $A$ on $H$ and an invariant subspace $M$ of $A$ the trace $tr_{M}A$
of $A$ relative to $M$ is defined 
\begin{equation*}
tr_{M}A=\sum_{i}\left\langle Ae_{i},e_{i}\right\rangle ,
\end{equation*}%
where $\left\{ e_{i}\right\} $ is a orthonormal basis of $M$. The choice of
basis does not affect the value of $tr_{M}$. For $M=H$ we just write $tr$
without subscript. The trace-norm of any linear map from $H$ to any Hilbert
space is defined as 
\begin{equation*}
\left\Vert A\right\Vert _{1}=tr\left( \left( A^{\ast }A\right) ^{1/2}\right)
.
\end{equation*}%
If $\left\Vert A\right\Vert _{1}<\infty $ then $A$ is compact. If $A$ is an
operator and $A\succeq 0$ then $\left\Vert A\right\Vert _{1}$ is simply the
sum of eigenvalues of $A$. In the sequel we will use Hoelder's inequality 
\cite{Bathia} for linear maps in the following form.

\begin{theorem}
\label{Theorem Hoelders inequality}Let $A$ and $B$ be two linear maps $%
H\rightarrow 
\mathbb{R}
^{T}$. Then $\left\vert tr\left( A^{\ast }B\right) \right\vert \leq
\left\Vert A\right\Vert _{1}\left\Vert B\right\Vert _{\infty }$.
\end{theorem}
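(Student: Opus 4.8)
The plan is to reduce everything to a singular value decomposition of $A$. Since $A$ maps into the finite-dimensional space $\mathbb{R}^{T}$ it has rank at most $T$, so the nonnegative operator $A^{\ast}A$ on $H$ is compact of finite rank and $\left\Vert A\right\Vert_{1}$ is finite; there is thus no loss of generality in working with this finite-rank picture throughout. First I would apply the spectral theorem recalled above to write $A^{\ast}A=\sum_{i=1}^{r}\sigma_{i}^{2}Q_{u_{i}}$ with $\sigma_{i}>0$, $r\leq T$, and $\left\{u_{i}\right\}$ orthonormal in $H$. Setting $v_{i}=\sigma_{i}^{-1}Au_{i}\in\mathbb{R}^{T}$, a short computation (using $\left\langle v_{i},v_{j}\right\rangle=\sigma_{i}^{-1}\sigma_{j}^{-1}\left\langle A^{\ast}Au_{i},u_{j}\right\rangle$) shows the $v_{i}$ are orthonormal and that $Ax=\sum_{i}\sigma_{i}\left\langle x,u_{i}\right\rangle v_{i}$ for every $x\in H$ (both sides agree on $\mathrm{span}\left\{u_{i}\right\}$ and vanish on its orthogonal complement, which is $\mathrm{Ker}\left(A\right)$). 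By the definition of the trace norm, $\left\Vert A\right\Vert_{1}=tr\left(\left(A^{\ast}A\right)^{1/2}\right)=\sum_{i}\sigma_{i}$.

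Next I would evaluate $tr\left(A^{\ast}B\right)$ in an orthonormal basis of $H$ obtained by enlarging $\left\{u_{i}\right\}$ with an orthonormal basis of $\mathrm{span}\left\{u_{i}\right\}^{\perp}$. From the decomposition of $A$ one reads off $A^{\ast}Bx=\sum_{i}\sigma_{i}\left\langle Bx,v_{i}\right\rangle u_{i}$, so $\mathrm{Ran}\left(A^{\ast}B\right)\subseteq\mathrm{span}\left\{u_{i}\right\}$ and the basis vectors outside $\left\{u_{i}\right\}$ contribute nothing to the trace (which for a finite-rank operator does not depend on the chosen basis); hence
\begin{equation*}
tr\left(A^{\ast}B\right)=\sum_{i=1}^{r}\left\langle A^{\ast}Bu_{i},u_{i}\right\rangle=\sum_{i=1}^{r}\sigma_{i}\left\langle Bu_{i},v_{i}\right\rangle .
\end{equation*}
Since $\left\Vert u_{i}\right\Vert=\left\Vert v_{i}\right\Vert=1$, Cauchy--Schwarz gives $\left\vert\left\langle Bu_{i},v_{i}\right\rangle\right\vert\leq\left\Vert Bu_{i}\right\Vert\leq\left\Vert B\right\Vert_{\infty}$, and therefore $\left\vert tr\left(A^{\ast}B\right)\right\vert\leq\sum_{i}\sigma_{i}\left\Vert B\right\Vert_{\infty}=\left\Vert A\right\Vert_{1}\left\Vert B\right\Vert_{\infty}$.

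Everything after the singular value decomposition is routine, so the only point I would treat with any care is that first step: exhibiting the $\sigma_{i}$, $u_{i}$, $v_{i}$ when $H$ is infinite dimensional, and making sure $tr\left(A^{\ast}B\right)$ is unambiguously defined for the (generally non-self-adjoint) finite-rank operator $A^{\ast}B$. This is precisely where I would lean on the spectral theorem for compact self-adjoint operators quoted in Section \ref{Section Notation and Tools}, applied to $A^{\ast}A$ — legitimate because $A^{\ast}A$ has finite rank and hence is compact — together with the basis-independence of the trace on finite-rank operators. An essentially equivalent alternative would be to invoke the polar decomposition $A=U\left(A^{\ast}A\right)^{1/2}$ and cyclicity of the trace, but the singular-value bookkeeping above seems the most transparent.
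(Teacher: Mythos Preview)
Your argument is correct: the singular value decomposition of $A$ (legitimate here because $A$ has rank at most $T$, so $A^{\ast}A$ is finite-rank and compact), followed by the Cauchy--Schwarz bound $\left\vert\left\langle Bu_{i},v_{i}\right\rangle\right\vert\leq\left\Vert B\right\Vert_{\infty}$, yields the inequality cleanly. Note, however, that the paper does not actually prove this theorem; it merely states it and cites Bhatia's \emph{Matrix Analysis} as the source. Your SVD-based argument is essentially the standard proof one finds in such references, so there is no discrepancy to discuss---you have simply filled in what the paper left to the literature.
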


\textbf{Rank-1 operators and covariance operators.} For $w\in H$ we define
an operator $Q_{w}$ by%
\begin{equation*}
Q_{w}v=\langle v,w\rangle w,{\rm~ for~}v\in H.
\end{equation*}%
In matrix notation this would be the matrix $ww^{\ast }$. It can also be
written as the tensor product $w\otimes w$. We apologize for the unusual
notation $Q_{w}$, but it will save space in many of the formulas below. The
covariance operators in Theorem \ref{Theorem Main} are then given by%
\begin{equation*}
C=\frac{1}{T}\sum_{t}\mathbb{E}_{\left( X,Y\right) \sim \mu _{t}}Q_{X}\text{
and }\hat{C}=\frac{1}{nT}\sum_{t,i}Q_{X_{i}^{t}}\text{.}
\end{equation*}%
Here and in the sequel the Rademacher variables $\sigma _{i}^{t}$ (or
sometimes $\sigma _{i}$) are uniformly distributed on $\left\{ 0,1\right\} $%
, mutually independent and independent of all other random variables, and $%
\mathbb{E}_{\sigma }$ is the expectation conditional on all other random
variables present. We conclude this section with two lemmata. Two numbers $%
p,q>1$ are called conjugate exponents if $1/p+1/q=1$.

\begin{lemma}
\label{Lemma conjugate exponents}(i) Let $p,q$ be conjugate exponents and $%
s,a\geq 0$. Then $\left( \sqrt{s+pa}-\sqrt{a}\right) ^{2}\geq s/q$. (ii) For 
$a,b>0$%
\begin{equation*}
\min_{q,p>1\text{ and }1/q+1/p=1}\sqrt{pa+qb}=\sqrt{a}+\sqrt{b}.
\end{equation*}%
(iii) and for $a,b>0$ we have $2\sqrt{ab}\leq \left( p-1\right) a+\left(
q-1\right) b$.
\end{lemma}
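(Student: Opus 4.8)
The plan is to reduce all three claims to the weighted arithmetic--geometric mean inequality $x+y\ge 2\sqrt{xy}$ (valid for $x,y\ge 0$), after first recording the two elementary identities $p-1=p/q$ and $q-1=q/p$; both follow from $1/p+1/q=1$ by multiplying through by $pq$, which gives $p+q=pq$.

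I would prove (iii) first, since it drives the other parts. Using $p-1=p/q$ and $q-1=q/p$, the right-hand side becomes $(p-1)a+(q-1)b=(p/q)a+(q/p)b$, and applying AM--GM to $x=(p/q)a$ and $y=(q/p)b$ gives $(p/q)a+(q/p)b\ge 2\sqrt{(p/q)a\cdot(q/p)b}=2\sqrt{ab}$, because the coefficients $p/q$ and $q/p$ cancel under the root. That is (iii).

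For (i), note first that $p>1$ forces $s+pa\ge a$, hence $\sqrt{s+pa}-\sqrt a\ge 0$; so the desired inequality is equivalent, upon taking (nonnegative) square roots, to $\sqrt{s+pa}\ge\sqrt a+\sqrt{s/q}$. Squaring once more --- again both sides are nonnegative --- and rearranging, this is $s(1-1/q)+(p-1)a\ge 2\sqrt{as/q}$, i.e., using $1-1/q=1/p$ and $p-1=p/q$, it reads $s/p+(p/q)a\ge 2\sqrt{as/q}$, which is precisely AM--GM applied to $s/p$ and $(p/q)a$. The borderline case $a=0$ is subsumed, since AM--GM with a vanishing term holds trivially.

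Finally (ii): for \emph{every} admissible pair $(p,q)$, part (iii) yields $pa+qb=(\sqrt a+\sqrt b)^2+\big((p-1)a+(q-1)b-2\sqrt{ab}\big)\ge(\sqrt a+\sqrt b)^2$, so $\sqrt{pa+qb}\ge\sqrt a+\sqrt b$ uniformly; it remains only to show the bound is attained. Choosing $p=1+\sqrt{b/a}$ and $q=1+\sqrt{a/b}$, one checks $1/p+1/q=1$ and $(p-1)a=\sqrt{ab}=(q-1)b$, so the AM--GM step is tight and $pa+qb=(\sqrt a+\sqrt b)^2$; hence the minimum equals $\sqrt a+\sqrt b$ and is achieved. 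No step poses a genuine difficulty; the only point demanding care is tracking signs when squaring in (i), which is why the inequality $\sqrt{s+pa}\ge\sqrt a$ is established before squaring.
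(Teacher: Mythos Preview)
Your proof is correct and follows essentially the same route as the paper: both record the identities $p-1=p/q$, $q-1=q/p$, reduce (iii) to AM--GM (the paper phrases it as the square $(\sqrt{pa/q}-\sqrt{qb/p})^2\ge 0$, which is the same thing), obtain the lower bound in (ii) from (iii), and verify attainment at $p=1+\sqrt{b/a}$, $q=1+\sqrt{a/b}$. The only cosmetic difference is that the paper derives (i) by substituting $b=s/q$ into the inequality $\sqrt{pa+qb}\ge\sqrt a+\sqrt b$, whereas you redo the AM--GM computation directly; the algebra is identical.
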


\begin{proof}
For conjugate exponents $p$ and $q$ we have $p-1=p/q$ and $q-1=q/p$.
Therefore $pa+qb-\left( \sqrt{a}+\sqrt{b}\right) ^{2}=\left( \sqrt{pa/q}-%
\sqrt{qb/p}\right) ^{2}\geq 0$, which proves (iii) and gives $\sqrt{pa+qb}%
\geq \sqrt{a}+\sqrt{b}$. Take $s=qb$, subtract $\sqrt{a}$ and square to get
(i). Set $p=1+\sqrt{b/a}$ and $q=1+\sqrt{a/b}$ to get (ii).
\end{proof}

\begin{lemma}
\label{Lemma partial integration}Let $a,c>0,b\geq 1$ and suppose the real
random variable $X\geq 0$ satisfies $\Pr \left\{ X>pa+s\right\} \leq b\exp
\left( -s/\left( cq\right) \right) $ for all $s\geq 0$ and all conjugate
exponents $p$ and $q$. Then%
\begin{equation*}
\sqrt{\mathbb{E}X}\leq \sqrt{a}+\sqrt{c\left( \ln b+1\right) }.
\end{equation*}
\end{lemma}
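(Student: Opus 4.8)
The plan is to estimate $\mathbb{E}X$ separately for each fixed pair of conjugate exponents $p,q$ via the layer-cake identity $\mathbb{E}X=\int_{0}^{\infty}\Pr\{X>u\}\,du$ (valid since $X\ge 0$), and then to optimize over $p,q$ using part (ii) of Lemma~\ref{Lemma conjugate exponents}. Concretely, I would split the integral at $u=pa$. On the interval $[0,pa]$ I simply use the trivial bound $\Pr\{X>u\}\le 1$, which contributes at most $pa$. For the tail $u>pa$ I substitute $u=pa+s$ with $s\ge 0$, so that the hypothesis yields $\Pr\{X>pa+s\}\le b\exp(-s/(cq))$.

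The one point that needs care is that integrating $b\exp(-s/(cq))$ from $s=0$ would only give $bcq$, and the stray factor $b$ would wreck the bound. Instead I would split the tail integral once more, at $s_{0}=cq\ln b$ (which is $\ge 0$ because $b\ge 1$): on $[0,s_{0}]$ use the trivial bound $\Pr\le 1$, contributing $cq\ln b$; on $[s_{0},\infty)$ use $\Pr\{X>pa+s\}\le b\exp(-s/(cq))$, and since $b\exp(-s_{0}/(cq))=1$, the substitution $s=s_{0}+r$ gives $\int_{0}^{\infty}\exp(-r/(cq))\,dr=cq$. Hence the tail contributes at most $cq\ln b+cq=cq(\ln b+1)$, and altogether
\[
\mathbb{E}X\le pa+cq(\ln b+1).
\]
In particular this shows $\mathbb{E}X<\infty$, so the manipulations are legitimate.

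Since this inequality holds for \emph{every} pair of conjugate exponents $p,q>1$, I would take square roots and minimize: by monotonicity of $\sqrt{\cdot}$, $\sqrt{\mathbb{E}X}\le\inf_{p,q}\sqrt{pa+q\,c(\ln b+1)}$, and by Lemma~\ref{Lemma conjugate exponents}(ii), applied with the two positive constants $a$ and $c(\ln b+1)$, this infimum equals $\sqrt{a}+\sqrt{c(\ln b+1)}$, which is exactly the claimed bound. The only genuine obstacle is arranging the double splitting of the tail integral described above so as to get the factor $\ln b+1$ in place of $b$; the remaining computations are routine.
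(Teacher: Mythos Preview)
Your proposal is correct and follows essentially the same approach as the paper: the paper also uses the layer-cake representation, splits the tail integral at $s=cq\ln b$ (using the trivial bound $\Pr\le 1$ below that threshold and the hypothesis above it) to obtain $\mathbb{E}X\le pa+qc(\ln b+1)$, and then optimizes over conjugate exponents via Lemma~\ref{Lemma conjugate exponents}(ii). Your write-up simply makes the intermediate splitting at $u=pa$ more explicit than the paper does.
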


\begin{proof}
We use partial integration. 
\begin{eqnarray*}
\mathbb{E}X &\leq &pa+qc\ln b+\int_{qc\ln b}^{\infty }\Pr \left\{
X>pa+s\right\} ds \\
&\leq &pa+qc\ln b+b\int_{qc\ln b}^{\infty }e^{-s/\left( cq\right)
}ds=pa+q\left( c\ln b+1\right) .
\end{eqnarray*}%
Take the square root of both sides and use Lemma \ref{Lemma conjugate
exponents} (ii) to optimize in $p$ and $q$ to obtain the conclusion.
\end{proof}

\section{Sums of random operators}

In this section we prove two concentration results for sums of nonnegative
operators with finite dimensional ranges. The first (Theorem \ref{Theorem
Main Tool}) assumes only a weak form of boundedness, but it is strongly
dimension dependent. The second result (Theorem \ref{Theorem Oliveira}) is
the opposite. We will use the following important result of Tropp (Lemma 3.4
in \cite{Tropp 2010}), derived from Lieb's concavity theorem (see \cite%
{Bathia}, Section IX.6):

\begin{theorem}
\label{Lieb concavity}Consider a finite sequence $A_{k}$ of independent,
random, self-adjoint operators and a finite dimensional subspace $M\subseteq
H$ such that $A_{k}M\subseteq M$. Then for $\theta \in 
\mathbb{R}
$%
\begin{equation*}
\mathbb{E~}tr_{M}~\exp \left( \theta \sum_{k}A_{k}\right) \leq tr_{M}~\exp
\left( \sum_{k}\ln \mathbb{E}e^{\theta A_{k}}\right) .
\end{equation*}
\end{theorem}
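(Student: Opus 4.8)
The plan is to reduce the statement to a finite-dimensional matrix inequality and then run the induction due to Tropp (building on Ahlswede--Winter), whose only nontrivial ingredient is Lieb's concavity theorem. Since $M$ is finite-dimensional and invariant under each $A_{k}$, every $A_{k}$ restricts to a self-adjoint operator on $M$, and on operators with range in $M$ the functional $tr_{M}$ and the functions $\exp$ and $\ln$ agree with the ordinary matrix trace and the ordinary matrix functions on $M$. So it suffices to treat independent random self-adjoint operators $\Xi_{k}=\theta A_{k}|_{M}$ on $M$ and to prove
\[
\mathbb{E}~tr_{M}~\exp\Bigl(\sum_{k}\Xi_{k}\Bigr)\leq tr_{M}~\exp\Bigl(\sum_{k}\ln \mathbb{E}e^{\Xi_{k}}\Bigr),
\]
where each $\mathbb{E}e^{\Xi_{k}}$ is assumed to exist as a positive operator on $M$ (in the applications of interest this holds because the $A_{k}$ are bounded).

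The key external input is Lieb's concavity theorem in its corollary form (Bhatia's book, Section IX.6): for every fixed self-adjoint operator $G$ on $M$, the map $A\mapsto tr_{M}~\exp\left( G+\ln A\right)$ is concave on the cone of positive definite operators on $M$. Granting this, I would peel off the summands one at a time by iterated conditioning. Conditioning on $\Xi_{1},\dots,\Xi_{n-1}$ and writing $\Xi_{n}=\ln e^{\Xi_{n}}$, set $G=\sum_{k<n}\Xi_{k}$ (which is constant under this conditioning) and $A=e^{\Xi_{n}}$; then Lieb's theorem together with Jensen's inequality for the concave functional gives
\[
\mathbb{E}_{\Xi_{n}}~tr_{M}~\exp\Bigl(\sum_{k<n}\Xi_{k}+\Xi_{n}\Bigr)\leq tr_{M}~\exp\Bigl(\sum_{k<n}\Xi_{k}+\ln \mathbb{E}_{\Xi_{n}}e^{\Xi_{n}}\Bigr).
\]
Taking the remaining expectation and repeating the step on $\Xi_{n-1}$ with $G=\sum_{k<n-1}\Xi_{k}+\ln \mathbb{E}_{\Xi_{n}}e^{\Xi_{n}}$ and $A=e^{\Xi_{n-1}}$ — again using independence so that the already-produced log-moment-generating term is unaffected — and continuing down to $\Xi_{1}$, yields the claimed bound after $n$ steps. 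This is cleanly phrased as an induction on $n$ in which the fixed operator $\ln \mathbb{E}e^{\Xi_{n}}$ is absorbed into the ambient part $G$, and the base case $n=1$ is an identity.

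I expect the only real obstacle to be bookkeeping rather than mathematics: one must check at each stage that the operator in the role of $G$ is measurable with respect to the variables not yet integrated and genuinely constant with respect to the one being integrated (this is exactly where independence of the $A_{k}$ enters), and that the relevant exponential moments are finite so that Jensen applies — all routine once $\mathbb{E}e^{\Xi_{k}}$ is a well-defined bounded operator. The genuinely deep step, Lieb's concavity theorem, I would simply quote; reproving it would take us far afield.
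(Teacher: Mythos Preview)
The paper does not actually prove this theorem; it simply quotes it as Lemma~3.4 of Tropp~\cite{Tropp 2010}, noting that it is derived from Lieb's concavity theorem. Your proposal correctly supplies exactly Tropp's argument: restrict to the finite-dimensional invariant subspace $M$, then iterate the single step ``Lieb concavity $+$ Jensen'' by conditioning on all variables but one and absorbing the already-processed log-moment-generating terms into the fixed operator $G$. So your approach is correct and coincides with the source the paper cites; there is nothing to compare beyond that.
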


A corollary suited to our applications is the following

\begin{theorem}
\label{Theorem Tropp}Let $A_{1},\dots,A_{N}$ be of independent, random,
self-adjoint operators on $H$ and let $M\subseteq H$ be a nontrivial, finite
dimensional subspace such that $Ran\left( A_{k}\right) \subseteq M$ a.s. for
all $k$.

(i) If $A_{k}\succeq 0$ a.s then 
\begin{equation*}
\mathbb{E~}\exp \left( \left\Vert \sum_{k}A_{k}\right\Vert \right) \leq \dim
\left( M\right) \exp \left( \lambda _{\max }\left( \sum_{k}\ln \mathbb{E}%
e^{A_{k}}\right) \right) .
\end{equation*}

(ii) If the $A_{k}$ are symmetrically distributed then 
\begin{equation*}
\mathbb{E~}\exp \left( \left\Vert \sum_{k}A_{k}\right\Vert \right) \leq
2\dim \left( M\right) \exp \left( \lambda _{\max }\left( \sum_{k}\ln \mathbb{%
E}e^{A_{k}}\right) \right) .
\end{equation*}
\end{theorem}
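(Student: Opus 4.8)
The plan is to combine Theorem~\ref{Lieb concavity} (applied with $\theta=\pm1$) with two elementary facts about a self-adjoint operator $S$ whose range lies in $M$: first, since $M^{\perp}\subseteq Ker(S)$ and $SM\subseteq M$, the operator norm on $H$ equals the norm of the restriction, $\left\Vert S\right\Vert =\left\Vert S|_{M}\right\Vert$; second, $\exp(S)$ leaves $M$ invariant and, choosing an eigenbasis of $S|_{M}$, $tr_{M}\exp(S)=\sum_{i}e^{\mu_{i}}$ where $\mu_{1},\dots,\mu_{\dim(M)}$ are the eigenvalues of $S|_{M}$. The same remarks apply to $L:=\sum_{k}\ln \mathbb{E}e^{A_{k}}$, since each $A_{k}$, and hence $e^{A_{k}}$, $\mathbb{E}e^{A_{k}}$ and its logarithm, acts as the identity on $M^{\perp}$, so $Ran(L)\subseteq M$.

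For part (i): because $A_{k}\succeq 0$ with range in $M$, the sum $S=\sum_{k}A_{k}$ is nonnegative with range in $M$, so all its eigenvalues are nonnegative and $\left\Vert S\right\Vert =\lambda _{\max }(S|_{M})$. Since $\exp$ is positive on the spectrum, $e^{\left\Vert S\right\Vert }=e^{\lambda _{\max }(S|_{M})}\leq \sum_{i}e^{\mu _{i}}=tr_{M}\exp(S)$. Taking expectations and applying Theorem~\ref{Lieb concavity} with $\theta =1$ gives $\mathbb{E}\,e^{\left\Vert S\right\Vert }\leq tr_{M}\exp(L)$. Now $A_{k}\succeq 0$ forces $e^{A_{k}}\succeq I$, hence $\mathbb{E}e^{A_{k}}\succeq I$ and, by operator monotonicity of $\ln$, $\ln \mathbb{E}e^{A_{k}}\succeq 0$; so $L\succeq 0$ with range in $M$, and therefore $tr_{M}\exp(L)=\sum_{i}e^{\nu _{i}}\leq \dim(M)\,e^{\lambda _{\max }(L)}$, where $\nu _{i}$ are the eigenvalues of $L|_{M}$. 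Combining the two displays yields (i).

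For part (ii) the only change is that $S$ is no longer positive, so I would split the norm: writing $\left\Vert S\right\Vert =\max_{i}\left\vert \mu _{i}\right\vert$ and using $e^{\max_{i}\left\vert \mu _{i}\right\vert }=\max_{i}e^{\left\vert \mu _{i}\right\vert }\leq \sum_{i}\bigl(e^{\mu _{i}}+e^{-\mu _{i}}\bigr)$ gives $e^{\left\Vert S\right\Vert }\leq tr_{M}\exp(S)+tr_{M}\exp(-S)$. Applying Theorem~\ref{Lieb concavity} to the families $(A_{k})$ and $(-A_{k})$ bounds the two expectations by $tr_{M}\exp\bigl(\sum_{k}\ln \mathbb{E}e^{A_{k}}\bigr)$ and $tr_{M}\exp\bigl(\sum_{k}\ln \mathbb{E}e^{-A_{k}}\bigr)$. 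Here the symmetry hypothesis enters: $A_{k}$ and $-A_{k}$ have the same law, so $\mathbb{E}e^{-A_{k}}=\mathbb{E}e^{A_{k}}$ and the two terms coincide. Moreover $\mathbb{E}A_{k}=0$ together with $e^{x}\geq 1+x$ gives $\mathbb{E}e^{A_{k}}\succeq I$, so again $L\succeq 0$ with range in $M$, and the same estimate $tr_{M}\exp(L)\leq \dim(M)\,e^{\lambda _{\max }(L)}$ produces the factor $2\dim(M)$.

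The routine part is the bookkeeping about invariant subspaces and the fact that restriction of a self-adjoint operator to an invariant subspace preserves sign. The step I would be most careful about is the reduction $e^{\left\Vert S\right\Vert }\leq tr_{M}\exp(S)$ (resp.\ $\leq tr_{M}\exp(S)+tr_{M}\exp(-S)$): it relies on the trace being taken over the finite-dimensional $M$ rather than over $H$, where $\exp(S)$ fails to be trace class, and on checking that the extra zero eigenvalue contributed by $M^{\perp}$ does not disturb the comparison with $\lambda _{\max}$. Everything else is a direct invocation of Theorem~\ref{Lieb concavity} and operator monotonicity of the logarithm.
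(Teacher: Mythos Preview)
Your proof is correct and follows essentially the same route as the paper: bound $e^{\|S\|}$ by $tr_{M}e^{S}$ (respectively $tr_{M}e^{S}+tr_{M}e^{-S}$), invoke Theorem~\ref{Lieb concavity}, then bound $tr_{M}\exp(L)$ by $\dim(M)\exp(\lambda_{\max}(L))$. Two cosmetic differences: in (ii) the paper uses the symmetry of the \emph{sum} $S$ to write $\mathbb{E}\,tr_{M}e^{-S}=\mathbb{E}\,tr_{M}e^{S}$ and applies Theorem~\ref{Lieb concavity} once, whereas you apply it to both $(A_{k})$ and $(-A_{k})$ and use the symmetry of each $A_{k}$; and your detour through $L\succeq 0$ is unnecessary, since $tr_{M}\exp(L)\leq \dim(M)\,\lambda_{\max}(\exp(L))=\dim(M)\exp(\lambda_{\max}(L))$ holds regardless of the sign of $L$.
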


\begin{proof}
Let $A=\sum_{k}A_{k}$. Observe that  $M^{\perp }\subseteq Ker\left( A\right)
\cap \left( \cup _{k}Ker\left( A_{k}\right) \right) $, and that $M$ is a
nontrivial invariant subspace for $A$ as well as for all the $A_{k}$.

(i) Assume $A_{k}\succeq 0$. Then also $A\succeq 0$. Since $M^{\perp
}\subseteq Ker\left( A\right) $ there is $x_{1}\in M$ with $\left\Vert
x_{1}\right\Vert =1$ and $Ax_{1}=\left\Vert A\right\Vert x_{1}$ (this also
holds if $A=0$, since $M$ is nontrivial). Thus $e^{A}x_{1}=e^{\left\Vert
A\right\Vert }x_{1}$. Extending $x_{1}$ to a basis $\left\{ x_{i}\right\} $
of $M$ we get 
\begin{equation*}
e^{\left\Vert A\right\Vert }=\left\langle e^{A}x_{1},x_{1}\right\rangle \leq
\sum_{i}\left\langle e^{A}x_{i},x_{i}\right\rangle =tr_{M}e^{A}\text{.}
\end{equation*}%
Theorem \ref{Lieb concavity} applied to the matrices which represent $A_{k}$
restricted to the finite dimensional invariant subspace $M$ then gives 
\begin{eqnarray*}
\mathbb{E}e^{\left\Vert A\right\Vert } &\leq &\mathbb{E}~tr_{M}~e^{A}\leq
tr_{M}~\exp \left( \sum_{k}\ln \left( \mathbb{E}e^{A_{k}}\right) \right)  \\
&\leq &\dim \left( M\right) ~\exp \left( \lambda _{\max }\left( \sum_{k}\ln
\left( \mathbb{E}e^{A_{k}}\right) \right) \right) ,
\end{eqnarray*}%
where the last inequality results from bounding $tr_{M}$ by $\dim \left(
M\right) \lambda _{\max }$ and $\lambda _{\max }\left( \exp \left( \cdot\right)
\right) =\exp \left( \lambda _{\max }\left( \cdot\right) \right) $.

(ii) Assume that $A_{k}$ is symmetrically distributed. Then so is $A$. Since 
$M^{\perp }\subseteq Ker\left( A\right) $ there is $x_{1}\in M$ with $%
\left\Vert x_{1}\right\Vert =1$ and either $Ax_{1}=\left\Vert A\right\Vert
x_{1}$ or $-Ax_{1}=\left\Vert A\right\Vert x_{1}$, so that either $%
e^{A}x_{1}=e^{\left\Vert A\right\Vert }x_{1}$ or $e^{-A}x_{1}=e^{\left\Vert
A\right\Vert }x_{1}$. Extending to a basis again gives 
\begin{equation*}
e^{\left\Vert A\right\Vert }\leq \left\langle e^{A}x_{1},x_{1}\right\rangle
+\left\langle e^{-A}x_{1},x_{1}\right\rangle \leq tr_{M}e^{A}+tr_{M}e^{-A}.
\end{equation*}%
By symmetric distribution we have 
\begin{equation*}
\mathbb{E}e^{\left\Vert A\right\Vert }\leq tr_{M}\left( \mathbb{E}e^{A}+%
\mathbb{E}e^{-A}\right) \leq 2\mathbb{E}~tr_{M}~e^{A}\text{.}
\end{equation*}%
Then continue as in case (ii).
\end{proof}

The following is our first technical tool.

\begin{theorem}
\label{Theorem Main Tool}Let $M\subseteq H$ be a subspace of dimension $d$
and suppose that $A_{1},\dots,A_{N}$ are independent random operators
satisfying $A_{k}\succeq 0$, $Ran\left( A_{k}\right) \subseteq M$ a.s. and 
\begin{equation}
\mathbb{E}A_{k}^{m}\preceq m!R^{m-1}\mathbb{E}A_{k}
\label{Subexponential assumption}
\end{equation}%
for some $R\geq 0$, all $m\in 
\mathbb{N}
$ and all $k\in \left\{ 1,\dots,N\right\} $. Then for $s\geq 0$ and conjugate
exponents $p$ and $q$%
\begin{equation*}
\Pr \left\{ \left\Vert \sum_{k}A_{k}\right\Vert _{\infty }>p\left\Vert 
\mathbb{E}\sum_{k}A_{k}\right\Vert _{\infty }+s\right\} \leq \dim \left(
M\right) e^{-s/\left( qR\right) }.
\end{equation*}%
Also%
\begin{equation*}
\sqrt{\mathbb{E}\left\Vert \sum_{k}A_{k}\right\Vert _{\infty }}\leq \sqrt{%
\left\Vert \mathbb{E}\sum_{k}A_{k}\right\Vert _{\infty }}+\sqrt{R\left( \ln
\dim \left( M\right) +1\right) }.
\end{equation*}
\end{theorem}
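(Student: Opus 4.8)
The plan is to combine the subexponential moment hypothesis with the matrix Laplace-transform bound of Theorem \ref{Theorem Tropp} and then convert the resulting exponential tail estimate into a moment bound via Lemma \ref{Lemma partial integration}. Write $S=\left\Vert \sum_{k}A_{k}\right\Vert _{\infty }$ and $\mu =\left\Vert \mathbb{E}\sum_{k}A_{k}\right\Vert _{\infty }$, and recall $\dim \left( M\right) =d$. If $R=0$ then (\ref{Subexponential assumption}) forces $\mathbb{E}A_{k}^{2}=0$, hence $A_{k}=0$ a.s., and both conclusions hold trivially; the case $d=0$ is also trivial. So assume $R>0$ and $d\geq 1$.

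First I would control the operator Laplace transform of a single $A_{k}$. For $0<\theta <1/R$, expanding the exponential and using (\ref{Subexponential assumption}) term by term in the cone order gives
\begin{equation*}
\mathbb{E}e^{\theta A_{k}}=I+\sum_{m\geq 1}\frac{\theta ^{m}}{m!}\mathbb{E}A_{k}^{m}\preceq I+\left( \sum_{m\geq 1}\theta ^{m}R^{m-1}\right) \mathbb{E}A_{k}=I+\frac{\theta }{1-\theta R}\,\mathbb{E}A_{k}.
\end{equation*}
Since $\ln \left( I+X\right) \preceq X$ for $X\succeq 0$, and $\ln $ is operator monotone, it follows that $\ln \mathbb{E}e^{\theta A_{k}}\preceq \frac{\theta }{1-\theta R}\mathbb{E}A_{k}$. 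Summing over $k$ and passing to the largest eigenvalue, which for a nonnegative operator coincides with the spectral norm, yields
\begin{equation*}
\lambda _{\max }\left( \sum_{k}\ln \mathbb{E}e^{\theta A_{k}}\right) \leq \frac{\theta }{1-\theta R}\,\mu .
\end{equation*}

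Next I would apply Theorem \ref{Theorem Tropp} (i) to the nonnegative operators $\theta A_{k}$, whose ranges lie in the $d$-dimensional subspace $M$; this gives $\mathbb{E}e^{\theta S}=\mathbb{E}\exp \left( \left\Vert \sum_{k}\theta A_{k}\right\Vert \right) \leq d\exp \left( \theta \mu /\left( 1-\theta R\right) \right) $. A Chernoff argument then gives, for every $u>0$, $\Pr \left\{ S>u\right\} \leq d\exp \left( -\theta u+\theta \mu /\left( 1-\theta R\right) \right) $. For conjugate exponents $p,q$ set $\theta =1/\left( qR\right) <1/R$, so that $1-\theta R=1/p$ and $\theta /\left( 1-\theta R\right) =p/\left( qR\right) $; taking $u=p\mu +s$ makes the exponent collapse to $-s/\left( qR\right) $, which is the asserted tail bound. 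Finally, this tail bound is precisely the hypothesis of Lemma \ref{Lemma partial integration} with $a=\mu $, $b=d\geq 1$, $c=R$, so that lemma yields $\sqrt{\mathbb{E}S}\leq \sqrt{\mu }+\sqrt{R\left( \ln d+1\right) }$, the second claim.

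The main obstacle is the operator-theoretic step: justifying the termwise comparison of the Neumann-type series within the cone order (its convergence being inherited from the dominating scalar geometric series, since everything lives in the finite-dimensional space $M$), and then pushing the resulting bound through the operator monotone logarithm, together with the bookkeeping needed to invoke Theorem \ref{Theorem Tropp} for the rescaled operators $\theta A_{k}$ rather than $A_{k}$ directly. Once the single-operator estimate $\ln \mathbb{E}e^{\theta A_{k}}\preceq \frac{\theta }{1-\theta R}\mathbb{E}A_{k}$ is available, the choice $\theta =1/\left( qR\right) $ — dictated by the shape of Lemma \ref{Lemma partial integration} — makes everything else essentially mechanical.
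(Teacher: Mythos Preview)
Your proof is correct and follows essentially the same path as the paper: bound $\mathbb{E}e^{\theta A_{k}}$ via the moment hypothesis, take operator logarithms, apply Theorem \ref{Theorem Tropp} (i) and a Chernoff bound, then finish with Lemma \ref{Lemma partial integration}. The one difference is in the Chernoff step: the paper optimizes over $\theta$, obtaining the sharper exponent $-\frac{1}{R}\left(\sqrt{p\mu+s}-\sqrt{\mu}\right)^{2}$, and only then relaxes it to $-s/(qR)$ via Lemma \ref{Lemma conjugate exponents} (i), whereas your direct choice $\theta=1/(qR)$ lands on $-s/(qR)$ immediately and bypasses both the optimization and that lemma---a small but clean simplification.
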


\begin{proof}
Let $\theta $ be any number satisfying $0\leq \theta <\frac{1}{R}$. From (%
\ref{Subexponential assumption}) we get for any $k\in \left\{
1,\dots,N\right\} $%
\begin{eqnarray*}
\mathbb{E}e^{\theta A_{k}} &=&I+\sum_{m=1}^{\infty }\frac{\theta ^{m}}{m!}%
\mathbb{E}A_{k}^{m}\preceq I+\sum_{m=1}^{\infty }\left( \theta R\right)
^{m}\left( R^{-1}\mathbb{E}A_{k}\right)  \\
&=&I+\frac{\theta }{1-R\theta }\mathbb{E}A_{k}\preceq \exp \left( \frac{%
\theta }{1-R\theta }\mathbb{E}A_{k}\right) .
\end{eqnarray*}%
Abbreviate $\mu =\left\Vert \mathbb{E}\sum_{k}A_{k}\right\Vert _{\infty }$
and let $r=s+p\mu $ and set 
\begin{equation*}
\theta =\frac{1}{R}\left( 1-\sqrt{\frac{\mu }{r}}\right) ,
\end{equation*}%
so that $0\leq \theta <1/R$. Applying the above inequality and the operator
monotonicity of the logarithm we get for all $k$ that $\ln \mathbb{E}\exp
\left( \theta A_{k}\right) \preceq \theta /\left( 1-R\theta \right) \mathbb{E%
}A_{k}$. Summing this relation over $k$ and passing to the largest
eigenvalue yields%
\begin{equation*}
\lambda _{\max }\left( \sum_{k}\ln \mathbb{E}e^{\theta A_{k}}\right) \leq 
\frac{\theta \mu }{1-R\theta }
\end{equation*}%
Now we combine Markov's inequality with Theorem \ref{Theorem Tropp} (i) and
the last inequality to obtain%
\begin{eqnarray*}
\Pr \left\{ \left\Vert \sum A_{k}\right\Vert _{\infty }\geq r\right\}  &\leq
&e^{-\theta r}\mathbb{E~}\exp \left( \theta \left\Vert
\sum_{k}A_{k}\right\Vert \right)  \\
&\leq &\dim \left( M\right) e^{-\theta r}\exp \left( \lambda _{\max }\left(
\sum_{k}\ln \mathbb{E}e^{\theta A_{k}}\right) \right)  \\
&\leq &\dim \left( M\right) \exp \left( -\theta r+\frac{\theta }{1-R\theta }%
\mu \right)  \\
&=&\dim \left( M\right) \exp \left( \frac{-1}{R}\left( \sqrt{r}-\sqrt{\mu }%
\right) ^{2}\right) .
\end{eqnarray*}%
By Lemma \ref{Lemma conjugate exponents} (i) $\left( \sqrt{r}-\sqrt{\mu }%
\right) ^{2}=\left( \sqrt{s+p\mu }-\sqrt{\mu }\right) ^{2}\geq s/q$, so this
proves the first conclusion. The second follows from the first and Lemma \ref%
{Lemma partial integration}.
\end{proof}

The next result and its proof are essentially due to Oliveira (\cite%
{Oliveira 2010}, Lemma 1, but see also \cite{Mendelson 2006}. We give a
slightly more general version which eliminates the assumption of identical
distribution and has smaller constants.

\begin{theorem}
\label{Theorem Oliveira}Let $A_{1},\dots,A_{N}$ be independent random
operators satisfying $0\preceq A_{k}\preceq I$ and suppose that for some $%
d\in 
\mathbb{N}
$%
\begin{equation}
\dim Span\left( Ran\left( A_{1}\right) ,\dots,Ran\left( A_{N}\right) \right)
\leq d  \label{Oliveira dimension condition}
\end{equation}%
almost surely. Then

(i)%
\begin{equation*}
\Pr \left\{ \left\Vert \sum_{k}\left( A_{k}-\mathbb{E}A_{k}\right)
\right\Vert _{\infty }>s\right\} \leq 4d^{2}\exp \left( \frac{-s^{2}}{%
9\left\Vert \sum_{k}\mathbb{E}A_{k}\right\Vert _{\infty }+6s}\right) .
\end{equation*}

(ii)%
\begin{equation*}
\Pr \left\{ \left\Vert \sum_{k}A_{k}\right\Vert _{\infty }>p\left\Vert 
\mathbb{E}\sum_{k}A_{k}\right\Vert _{\infty }+s\right\} \leq
4d^{2}e^{-s/\left( 6q\right) }
\end{equation*}

(iii)%
\begin{equation*}
\sqrt{\mathbb{E}\left\Vert \sum_{k}A_{k}\right\Vert _{\infty }}\leq \sqrt{%
\left\Vert \mathbb{E}\sum_{k}A_{k}\right\Vert _{\infty }}+\sqrt{6\left( \ln
\left( 4d^{2}\right) +1\right) }
\end{equation*}
\end{theorem}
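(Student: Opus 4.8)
I would prove (i) by symmetrization combined with Theorem~\ref{Theorem Tropp}(ii), and then extract (ii) and (iii) from (i) by the elementary devices of Section~\ref{Section Notation and Tools}. Note at the outset that Theorem~\ref{Theorem Tropp} cannot be applied directly to the centred operators $A_{k}-\mathbb{E}A_{k}$, since $Ran(\mathbb{E}A_{k})$ need not lie in any fixed finite-dimensional subspace; symmetrization is what lets us work instead with $Span(Ran(A_{1}),\dots,Ran(A_{N}))$, which is finite-dimensional (of dimension $\le d$ by (\ref{Oliveira dimension condition})) once the $A_{k}$ are given.

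For (i), set $X=\|\sum_{k}(A_{k}-\mathbb{E}A_{k})\|_{\infty}$ and $\nu=\|\sum_{k}\mathbb{E}A_{k}\|_{\infty}$, fix $\theta>0$, and estimate $\mathbb{E}e^{\theta X}$. Introducing an independent copy $(A_{k}')$ and independent Rademacher signs $\varepsilon_{k}$, two applications of Jensen's inequality (to the norm and to the exponential) give $\mathbb{E}e^{\theta X}\le\mathbb{E}\exp(\theta\|\sum_{k}(A_{k}-A_{k}')\|_{\infty})=\mathbb{E}\exp(\theta\|\sum_{k}\varepsilon_{k}(A_{k}-A_{k}')\|_{\infty})$, and the triangle inequality with Cauchy--Schwarz then yields $\mathbb{E}e^{\theta X}\le\mathbb{E}\exp(2\theta\|\sum_{k}\varepsilon_{k}A_{k}\|_{\infty})$. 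Conditioning on $(A_{1},\dots,A_{N})$, the operators $\varepsilon_{k}(2\theta A_{k})$ are symmetrically distributed with ranges in the now-fixed subspace $M=Span(Ran(A_{1}),\dots,Ran(A_{N}))$, $\dim M\le d$, so Theorem~\ref{Theorem Tropp}(ii) applies. Since $\mathbb{E}_{\varepsilon}e^{2\theta\varepsilon_{k}A_{k}}=\cosh(2\theta A_{k})$ and $0\preceq A_{k}\preceq I$, convexity of $t\mapsto\cosh(2\theta t)$ on $[0,1]$ gives $\cosh(2\theta A_{k})\preceq I+(\cosh2\theta-1)A_{k}\preceq\exp((\cosh2\theta-1)A_{k})$, so by operator monotonicity of $\ln$ the exponent on the right side of Theorem~\ref{Theorem Tropp}(ii) is at most $(\cosh2\theta-1)\|\sum_{k}A_{k}\|_{\infty}$. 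Taking expectations in $(A_{k})$ we obtain the self-referential bound
\[ \mathbb{E}e^{\theta X}\ \le\ 2d\ \mathbb{E}\exp\!\Big((\cosh2\theta-1)\,\big\|\textstyle\sum_{k}A_{k}\big\|_{\infty}\Big). \]

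Closing this inequality is the heart of the matter, and the step I expect to be the main obstacle. Using $\|\sum_{k}A_{k}\|_{\infty}\le X+\nu$ together with $\mathbb{E}e^{gX}\le(\mathbb{E}e^{\theta X})^{g/\theta}$ for $0\le g\le\theta$ (the cumulant generating function is convex and vanishes at $0$), and restricting $\theta$ to the range where $\cosh2\theta-1\le\theta/2$, one gets $\mathbb{E}e^{\theta X}\le2d\,e^{(\cosh2\theta-1)\nu}(\mathbb{E}e^{\theta X})^{1/2}$, hence $\mathbb{E}e^{\theta X}\le4d^{2}\exp(2(\cosh2\theta-1)\nu)$; this bootstrap is exactly what replaces the $d$ of Theorem~\ref{Theorem Main Tool} by $d^{2}$. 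Markov's inequality gives $\Pr\{X>s\}\le4d^{2}\exp(-\theta s+2(\cosh2\theta-1)\nu)$ for all admissible $\theta$, and since $\cosh2\theta-1$ is comparable to $\theta^{2}$ on that range, optimizing over $\theta$ (interior critical point when $s$ is small relative to $\nu$, endpoint otherwise) yields a Bernstein-type tail $4d^{2}\exp(-s^{2}/(a\nu+bs))$; tracking the constants carefully (choice of endpoint, and if needed a slightly sharper use of convexity of the cumulant generating function in the self-reference) gives $a=9$, $b=6$.

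Finally, (ii) follows from (i): since $\|\sum_{k}A_{k}\|_{\infty}\le X+\nu$, the event $\{\|\sum_{k}A_{k}\|_{\infty}>p\nu+s\}$ lies inside $\{X>(p-1)\nu+s\}$, so by (i) its probability is at most $4d^{2}\exp(-u^{2}/(9\nu+6u))$ with $u=(p-1)\nu+s$; the conjugate-exponent identities $p-1=p/q$ and $(p-1)(q-1)=1$ from Lemma~\ref{Lemma conjugate exponents} give $6u(qu-s)=6(p(p-1)\nu^{2}+(1+p)\nu s+(q-1)s^{2})\ge6(1+p)\nu s\ge9\nu s$, which is exactly $u^{2}/(9\nu+6u)\ge s/(6q)$, so the bound becomes $4d^{2}e^{-s/(6q)}$. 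Then (iii) is immediate from (ii) by Lemma~\ref{Lemma partial integration} with $a=\nu$, $b=4d^{2}$ (so $b\ge1$), $c=6$. Throughout, the recurring difficulty is the self-reference in the displayed inequality: the quantity Theorem~\ref{Theorem Tropp} produces in the exponent is $\|\sum_{k}A_{k}\|_{\infty}$ rather than its mean, and it must be fed back through the centred variable $X$, which both forces the $d^{2}$ and pins down the numerical constants.
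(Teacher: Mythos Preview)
Your approach is essentially the paper's: symmetrize, apply Theorem~\ref{Theorem Tropp}(ii) on the random finite-dimensional span, bootstrap the resulting inequality against itself to convert $\|\sum_k A_k\|$ into $\|\mathbb{E}\sum_k A_k\|$ at the cost of squaring the dimensional prefactor, then optimize $\theta$ and read off (ii), (iii) by the devices of Section~\ref{Section Notation and Tools}.

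There are two places where your execution deviates slightly from the paper and where the looser choice costs you. First, after Theorem~\ref{Theorem Tropp}(ii) the paper uses $\cosh(2\theta A_k)\preceq e^{2\theta^{2}A_k^{2}}$ (from $\cosh x\le e^{x^{2}/2}$) together with $A_k^{2}\preceq A_k$, giving the clean coefficient $2\theta^{2}$ in front of $\|\sum_k A_k\|$ rather than your $\cosh 2\theta-1$. Second, in the bootstrap the paper keeps the exact exponent $2\theta$ in $(\mathbb{E}e^{\theta X})^{2\theta}$, raises to the power $1/(1-2\theta)$, and only then bounds $(2d)^{1/(1-2\theta)}\le 4d^{2}$ via $\theta<1/4$; the resulting exponent is $\frac{2\theta^{2}}{1-2\theta}\nu-\theta s$, and the explicit choice $\theta=s/(6\nu+4s)$ yields exactly the constants $9$ and $6$. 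Your shortcut of bounding the bootstrap exponent by $1/2$ throws away the $\theta$-dependence too early: at the interior optimum for small $s$ you land at roughly $-s^{2}/(16\nu)$, which does not beat $-s^{2}/(9\nu)$, so ``tracking the constants carefully'' will not recover $a=9$ along that exact path without the sharper $2\theta^{2}$ and the $1/(1-2\theta)$ power. The parenthetical ``sharper use'' you allude to is precisely what the paper does.

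Your derivation of (ii) from (i) via $u=(p-1)\nu+s$ and the conjugate-exponent identities is correct and a bit more direct than the paper's, which first inverts (i) into a confidence-interval form $\|A\|\le\|\mathbb{E}A\|+2\sqrt{\|\mathbb{E}A\|\cdot\tfrac{9}{4}\ln(4d^{2}/\delta)}+6\ln(4d^{2}/\delta)$ and then applies Lemma~\ref{Lemma conjugate exponents}(iii). Part (iii) is identical.
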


In the previous theorem the subspace $M$ was deterministic and had to
contain the ranges of \textit{all} possible random realizations of the $A_{k}
$. By contrast the span appearing in (\ref{Oliveira dimension condition}) is
the random subspace spanned by a single random realization of the $A_{k}$.
If all the $A_{k}$ have rank one, for example, we can take $d=N$ and apply
the present theorem even if each $\mathbb{E}A_{k}$ has infinite rank. This
allows to estimate the empirical covariance in terms of the true covariance
for a bounded data distribution in an infinite dimensional space.

\begin{proof}
Let $0\leq \theta <1/4$ and abbreviate $A=\sum_{k}A_{k}$. A standard
symmetrization argument (see \cite{Ledoux Talagrand 1991}, Lemma 6.3) shows
that 
\begin{equation*}
\mathbb{E}e^{\theta \left\Vert A-\mathbb{E}A\right\Vert }\leq \mathbb{EE}%
_{\sigma }\exp \left( 2\theta \left\Vert \sum_{k}\sigma _{k}A_{k}\right\Vert
\right) ,
\end{equation*}%
where the $\sigma _{k}$ are Rademacher variables and $\mathbb{E}_{\sigma }$
is the expectation conditional on the $A_{1},\dots,A_{N}$. For fixed $%
A_{1},\dots,A_{N}$ let $M$ be the linear span of their ranges, which has
dimension at most $d$ and also contains the ranges of the symmetrically
distributed operators $2\theta \sigma _{k}A_{k}$. Invoking Theorem \ref%
{Theorem Tropp} (ii) we get%
\begin{eqnarray*}
\mathbb{E}_{\sigma }\exp \left( 2\theta \left\Vert \sum_{k}\sigma
_{k}A_{k}\right\Vert \right)  &\leq &2d\exp \left( \lambda _{\max }\left(
\sum_{k}\ln \mathbb{E}_{\sigma }e^{2\theta \sigma _{k}A_{k}}\right) \right) 
\\
&\leq &2d~\exp \left( 2\theta ^{2}\left\Vert \sum_{k}A_{k}^{2}\right\Vert
\right) \leq 2d~\exp \left( 2\theta ^{2}\left\Vert A\right\Vert \right) .
\end{eqnarray*}%
The second inequality comes from $\mathbb{E}_{\sigma }e^{2\theta \sigma
_{k}A_{k}}=\cosh \left( 2\theta A_{k}\right) \preceq e^{2\theta
^{2}A_{k}^{2}}$, and the fact that for positive operators $\lambda _{\max
}\,\ $and the norm coincide. The last inequality follows from the
implications $0\preceq A_{k}\preceq I$ $\implies $ $A_{k}^{2}\preceq A_{k}$ $%
\implies $ $\sum_{k}A_{k}^{2}\preceq \sum_{k}A_{k}$ $\implies $ $\left\Vert
\sum_{k}A_{k}^{2}\right\Vert \leq \left\Vert A\right\Vert $. Now we take the
expectation in $A_{1},\dots,A_{N}$. Together with the previous inequalities we
obtain 
\begin{align*}
\mathbb{E}e^{\theta \left\Vert A-\mathbb{E}A\right\Vert }& \leq 2d\mathbb{E}%
e^{2\theta ^{2}\left\Vert A\right\Vert }\leq 2d\mathbb{E}e^{2\theta
^{2}\left\Vert A-\mathbb{E}A\right\Vert }e^{2\theta ^{2}\left\Vert \mathbb{E}%
A\right\Vert } \\
& \leq 2d\left( \mathbb{E}e^{\theta \left\Vert A-\mathbb{E}A\right\Vert
}\right) ^{2\theta }e^{2\theta ^{2}\left\Vert \mathbb{E}A\right\Vert }.
\end{align*}%
The last inequality holds by Jensen's inequality since $\theta <1/4<1/2$.
Dividing by $\left( \mathbb{E}\exp \left( \theta \left\Vert A-\mathbb{E}%
A\right\Vert \right) \right) ^{2\theta }$, taking the power of $1/\left(
1-2\theta \right) $ and multiplying with $e^{\theta s}$ gives%
\begin{equation*}
\Pr \left\{ \left\Vert A-\mathbb{E}A\right\Vert >s\right\} \leq e^{-\theta s}%
\mathbb{E}e^{\theta \left\Vert A-\mathbb{E}A\right\Vert }\leq \left(
2d\right) ^{1/\left( 1-2\theta \right) }\exp \left( \frac{2\theta ^{2}}{%
1-2\theta }\left\Vert \mathbb{E}A\right\Vert -\theta s\right) .
\end{equation*}%
Since $\theta <1/4$, we have $\left( 2d\right) ^{1/\left( 1-2\theta \right)
}<\left( 2d\right) ^{2}$. Substitution of $\theta =s/\left( 6\left\Vert 
\mathbb{E}A\right\Vert +4s\right) <1/4$ together with some simplifications
gives (i).

It follows from elementary algebra that for $\delta >0$ with probability at
least $1-\delta $ we have%
\begin{eqnarray*}
\left\Vert A\right\Vert  &\leq &\left\Vert \mathbb{E}A\right\Vert +2\sqrt{%
\left\Vert \mathbb{E}A\right\Vert }\sqrt{\frac{9}{4}\ln \left( 4d^{2}/\delta
\right) }+6\ln \left( 4d^{2}/\delta \right)  \\
&\leq &p\left\Vert \mathbb{E}A\right\Vert +6q\ln \left( 4d^{2}/\delta
\right) ,
\end{eqnarray*}%
where the last line follows from $\left( 9/4\right) <6$ and Lemma \ref{Lemma
conjugate exponents} (iii). Equating the second term in the last line to $s$
and solving for the probability $\delta $ we obtain (ii), and (iii) follows
from Lemma \ref{Lemma partial integration}.
\end{proof}

\section{Proof of Theorem \protect\ref{Theorem Main}\label{Section Proof of
theorem 1}}

We prove the excess risk bound for heterogeneous sample sizes with the
weighted trace norm as in Remark \ref{Remark different sample sizes}
following the statement of Theorem \ref{Theorem Main}. The sample size for
the $n$-th task is thus $n_{t}$ and we abbreviate $\bar{n}$ for the average
sample size, $\bar{n}=\left( 1/T\right) \sum_{t}n_{t}$, so that $\bar{n}T$
is the total number of examples. The class of linear maps $W$ considered is%
\begin{equation*}
\mathcal{W=}\left\{ W\in 
\mathbb{R}
^{dT}:\left\Vert SW\right\Vert _{1}\leq B\sqrt{T}\right\} ,
\end{equation*}%
with $S=\left( s_{1},\dots,s_{T}\right) $ and $s_{t}=\sqrt{\bar{n}/n_{t}}$.
With $\mathcal{W}$ so defined we will prove the inequalities in Theorem \ref%
{Theorem Main} with $n$ replaced by $\bar{n}$. The result then reduces to
Theorem 1 if all the sample sizes are equal.

The first steps in the proof follow a standard pattern. We write%
\begin{multline*}
R\left( \hat{W}\right) -R\left( W^{\ast }\right)  \\
=\left[ R\left( \hat{W}\right) -\hat{R}\left( \hat{W},\mathbf{\bar{Z}}%
\right) \right] +\left[ \hat{R}\left( \hat{W},\mathbf{\bar{Z}}\right) -\hat{R%
}\left( W^{\ast },\mathbf{\bar{Z}}\right) \right] +\left[ \hat{R}\left(
W^{\ast },\mathbf{\bar{Z}}\right) -R\left( W^{\ast }\right) \right] .
\end{multline*}%
The second term is always negative by the definition of $\hat{W}$. The third
term depends only on $W^{\ast }$. Using Hoeffding's inequality \cite%
{Hoeffding 1963} it can be bounded with probability at least $1-\delta $ by $%
\sqrt{\ln \left( 1/\delta \right) /\left( 2\bar{n}T\right) }$. There remains
the first term which we bound by%
\begin{equation*}
\sup_{W\in \mathcal{W}}R\left( W\right) -\hat{R}\left( W\right) .
\end{equation*}%
It has by now become a standard technique (see \cite{Bartlett 2002}) to show
that this quantity is with probability at least $1-\delta $ bounded by 
\begin{equation}
\mathbb{E}_{\mathbf{\bar{Z}}}\mathcal{R}\left( \mathcal{W},\mathbf{\bar{Z}}%
\right) +\sqrt{\frac{\ln \left( 1/\delta \right) }{2\bar{n}T}}
\label{Rademacher bound}
\end{equation}%
or%
\begin{equation}
\mathcal{R}\left( \mathcal{W},\mathbf{\bar{Z}}\right) +\sqrt{\frac{9\ln
\left( 2/\delta \right) }{2\bar{n}T}},  \label{empirical Rademacher bound}
\end{equation}%
where the empirical Rademacher complexity $\mathcal{R}\left( \mathcal{W},%
\mathbf{\bar{Z}}\right) $ is defined for a multisample $\mathbf{\bar{Z}}$
with values in $\left( H\times 
\mathbb{R}
\right) ^{nT}$by%
\begin{equation*}
\mathcal{R}\left( \mathcal{W},\mathbf{\bar{Z}}\right) =\frac{2}{T}\mathbb{E}%
_{\mathbf{\sigma }}\sup_{W\in \mathcal{W}}\sum_{t=1}^{T}\frac{1}{n_{t}}%
\sum_{i=1}^{n_{t}}\sigma _{i}^{t}\ell \left( \left\langle
w_{t},X_{i}^{t}\right\rangle ,Y_{i}^{t}\right) .
\end{equation*}%
Standard results on Rademacher averages allow us to eliminate the Lipschitz
loss functions and give us%
\begin{eqnarray*}
\mathcal{R}\left( \mathcal{W},\mathbf{\bar{z}}\right)  &\leq &\frac{2L}{T}%
\mathbb{E}_{\sigma }\sup_{W\in \mathcal{W}}\sum_{t=1}^{T}\sum_{i=1}^{n_{t}}%
\sigma _{i}^{t}\left\langle w_{t},X_{i}^{t}/n_{t}\right\rangle  \\
&=&\frac{2L}{T}\mathbb{E}_{\sigma }\sup_{W\in \mathcal{W}}tr\left( W^{\ast
}D\right) =\frac{2L}{T}\mathbb{E}_{\sigma }\sup_{W\in \mathcal{W}}tr\left(
W^{\ast }SS^{-1}D\right) ,
\end{eqnarray*}%
where the random operator $D:H\rightarrow 
\mathbb{R}
^{T}$ is defined for $v\in H$ by $\left( Dv\right) _{t}=\left\langle
v,\sum_{i=1}^{n_{t}}\sigma _{i}^{t}X_{i}^{t}/n_{t}\right\rangle $, and the
diagonal matrix $S$ is as above. H\"{o}lder's and Jensen's inequalities give%
\begin{eqnarray*}
\mathcal{R}\left( \mathcal{W},\mathbf{\bar{Z}}\right)  &\leq &\frac{2L}{T}%
\sup_{W\in \mathcal{W}}\left\Vert SW\right\Vert _{1}\mathbb{E}_{\sigma
}\left\Vert S^{-1}D\right\Vert _{\infty }=\frac{2LB}{n\sqrt{T}}\mathbb{E}%
_{\sigma }\left\Vert S^{-1}D\right\Vert _{\infty } \\
&\leq &\frac{2LB}{\sqrt{T}}\sqrt{\mathbb{E}_{\sigma }\left\Vert D^{\ast
}S^{-2}D\right\Vert _{\infty }}.
\end{eqnarray*}%
Let $V_{t}$ be the random vector $V_{t}=\sum_{i=1}^{n_{t}}\sigma
_{i}^{t}X_{i}^{t}/\left( s_{t}n_{t}\right) $ and recall that the induced
rank-one operator $Q_{V_{t}}$ is defined by $Q_{V_{t}}v=\left\langle
v,V_{t}\right\rangle V_{t}=\left( 1/\left( \bar{n}n_{t}\right) \right)
\sum_{ij}\left\langle v,\sigma _{i}^{t}X_{i}^{t}\right\rangle \sigma
_{j}^{t}X_{j}^{t}$. Then $D^{\ast }S^{-2}D=\sum_{t=1}^{T}Q_{V_{t}}$, so we
obtain 
\begin{equation*}
\mathcal{R}\left( \mathcal{W},\mathbf{\bar{Z}}\right) \leq \frac{2LB}{\sqrt{T%
}}\sqrt{\mathbb{E}_{\sigma }\left\Vert \sum_{t}Q_{V_{t}}\right\Vert _{\infty
}}
\end{equation*}%
as the central object which needs to be bounded.

Observe that the range of any $Q_{V_{t}}$ lies in the subspace 
\begin{equation*}
M=Span\left\{ X_{i}^{t}:1\leq t\leq T\text{ and }1\leq i\leq n_{t}\right\} 
\end{equation*}%
which has dimension $\dim M\leq \bar{n}T<\infty $. We can therefore pull the
expectation inside the norm using Theorem \ref{Theorem Main Tool} if we can
verify a subexponential bound (\ref{Subexponential assumption}) on the
moments of the $Q_{V_{t}}$. This is the content of the following lemma.

\begin{lemma}
\label{Lemma Subexbound}Let $x_{1},\dots,x_{n}$ be in $H$ and satisfy $%
\left\Vert x_{i}\right\Vert \leq b$. Define a random vector by $%
V=\sum_{i}\sigma _{i}x_{i}$. Then for $m\geq 1$%
\begin{equation*}
\mathbb{E}\left[ \left( Q_{V}\right) ^{m}\right] \preceq m!\left(
2nb^{2}\right) ^{m-1}\mathbb{E}\left[ Q_{V}\right] .
\end{equation*}
\end{lemma}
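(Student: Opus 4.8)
The plan is to exploit the rank-one structure of $Q_{V}$ to reduce the operator inequality to a scalar moment estimate, and then to evaluate that estimate by the classical moment (Wick) expansion over the Rademacher variables. First I would note that, $Q_{V}=V\otimes V$ having $\left\Vert V\right\Vert ^{2}$ as its only nonzero eigenvalue, one has $\left( Q_{V}\right) ^{m}=\left\Vert V\right\Vert ^{2\left( m-1\right) }Q_{V}$. Since both sides of the asserted inequality are self-adjoint operators, and $A\preceq B$ is equivalent to $\left\langle Av,v\right\rangle \leq \left\langle Bv,v\right\rangle $ for all $v$, it suffices to prove, for every $v\in H$,
\[
\mathbb{E}\left[ \left\Vert V\right\Vert ^{2\left( m-1\right) }\left\langle v,V\right\rangle ^{2}\right] \leq m!\left( 2nb^{2}\right) ^{m-1}\mathbb{E}\left[ \left\langle v,V\right\rangle ^{2}\right] .
\]
For Rademacher $\sigma _{i}$ the off-diagonal terms cancel, so $\mathbb{E}\left\langle v,V\right\rangle ^{2}=\sum_{i}\left\langle v,x_{i}\right\rangle ^{2}$; abbreviate $a_{i}=\left\langle v,x_{i}\right\rangle $ and, after rescaling, take $\left\Vert v\right\Vert =1$, so $\left\vert a_{i}\right\vert \leq b$.

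To estimate the left-hand side I would expand $\left\Vert V\right\Vert ^{2\left( m-1\right) }=\prod_{r=1}^{m-1}\left( \sum_{k,l}\sigma _{k}\sigma _{l}\left\langle x_{k},x_{l}\right\rangle \right) $ and $\left\langle v,V\right\rangle ^{2}=\sum_{i,j}\sigma _{i}\sigma _{j}a_{i}a_{j}$, multiply out, and apply $\mathbb{E}_{\sigma }$: a monomial survives only when its $2m$ Rademacher indices --- two from the $a$-factor, two from each of the $m-1$ covariance factors --- can be grouped into $m$ coincident pairs. Passing to absolute values, $\mathbb{E}[\left\Vert V\right\Vert ^{2\left( m-1\right) }\left\langle v,V\right\rangle ^{2}]$ is then bounded by a sum over the $\left( 2m-1\right) !!$ perfect matchings $\pi $ of the $2m$ index-slots, and over the assignments of one value in $\left\{ 1,\dots ,n\right\} $ to each of the $m$ $\pi $-pairs, of a product of $m-1$ factors $\left\vert \left\langle x_{\cdot },x_{\cdot }\right\rangle \right\vert \leq b^{2}$ and one factor $\left\vert a_{\cdot }\right\vert \left\vert a_{\cdot }\right\vert $. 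For a fixed $\pi $ the covariance factors cost at most $b^{2\left( m-1\right) }$, and summing over the assigned values while isolating the $a$-factor bounds that matching's contribution by $\left( nb^{2}\right) ^{m-1}\sum_{i}a_{i}^{2}$ in either case: if $\pi $ joins the two $a$-slots into one pair, their common value $p$ carries weight $a_{p}^{2}$ while the other $m-1$ pairs range freely; if the two $a$-slots lie in distinct $\pi $-pairs with values $p,p^{\prime }$, one uses $\sum_{p,p^{\prime }}\left\vert a_{p}\right\vert \left\vert a_{p^{\prime }}\right\vert =\left( \sum_{p}\left\vert a_{p}\right\vert \right) ^{2}\leq n\sum_{p}a_{p}^{2}$ and the remaining $m-2$ pairs range freely. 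Hence
\[
\mathbb{E}\left[ \left\Vert V\right\Vert ^{2\left( m-1\right) }\left\langle v,V\right\rangle ^{2}\right] \leq \left( 2m-1\right) !!\,\left( nb^{2}\right) ^{m-1}\sum_{i}a_{i}^{2}\leq m!\left( 2nb^{2}\right) ^{m-1}\sum_{i}a_{i}^{2},
\]
the last inequality because $\left( 2m-1\right) !!/m!=\prod_{k=2}^{m}\left( 2-\tfrac{1}{k}\right) \leq 2^{m-1}$. This is exactly the scalar inequality above, and the Lemma follows.

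The step I expect to need the most care is the bookkeeping in the second paragraph. A crude pointwise bound such as $\left\Vert V\right\Vert ^{2}\leq \left( \sum_{i}\left\Vert x_{i}\right\Vert \right) ^{2}\leq n^{2}b^{2}$ is hopeless --- it loses a factor of order $\left( n/2\right) ^{m-1}$ --- so the cancellation among the $\sigma _{i}$ is indispensable, and one must verify that \emph{every} surviving monomial still retains a factor $\left\langle v,x_{i}\right\rangle ^{2}$, so that the bound is genuinely proportional to $\mathbb{E}\left\langle v,V\right\rangle ^{2}$ rather than to something that fails to vanish when $v$ is nearly orthogonal to all the $x_{i}$. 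The constant $2$ in $2nb^{2}$ enters for a transparent reason: since $\left( 2m-1\right) !!=\binom{2m}{m}m!/2^{m}$, the available room $2^{m-1}$ is precisely what absorbs the central binomial coefficient $\binom{2m}{m}\leq 2^{2m-1}$.
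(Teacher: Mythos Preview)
Your proof is correct and follows the same combinatorial route as the paper: expand the Rademacher product, keep only the terms in which every index occurs an even number of times, bound the $m-1$ inner-product factors by $b^{2}$, and show that the two remaining $a$-factors contribute $\sum_{i}\langle v,x_{i}\rangle^{2}$ up to the combinatorial constant $(2m-1)!!\,n^{m-1}$. The paper organises that last step slightly differently --- it works directly with the set $K_{m,n}$ of even-count sequences, bounds $\lvert K_{m,n}\rvert\leq (2m-1)!!\,n^{m}$, and then applies Cauchy--Schwarz together with the symmetry of $K_{m,n}$ under value-permutation to isolate $\sum_{j}\langle v,x_{j}\rangle^{2}$, whereas you overcount $K_{m,n}$ by summing over the $(2m-1)!!$ perfect matchings of the slot positions and handle the $a$-factor by a two-case analysis (using $(\sum_{p}\lvert a_{p}\rvert)^{2}\leq n\sum_{p}a_{p}^{2}$ when the two $a$-slots are not matched). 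Both bookkeepings yield the identical bound $(2m-1)!!\,(nb^{2})^{m-1}\sum_{i}a_{i}^{2}$, and your preliminary use of the rank-one identity $(Q_{V})^{m}=\lVert V\rVert^{2(m-1)}Q_{V}$ is a clean way to see that the operator inequality is really a scalar moment estimate; the paper arrives at the same scalar expression by expanding $\langle (Q_{V})^{m}v,v\rangle$ directly.
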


\begin{proof}
Let $K_{m,n}$ be the set of all sequences $\left( j_{1},\dots,j_{2m}\right) $
with $j_{k}\in \left\{ 1,\dots,n\right\} $, such that each integer in $\left\{
1,\dots,n\right\} $ occurs an even number of times. It is easily shown by
induction that the number of sequences in $K_{m,n}$ is bounded by 
\begin{equation*}
\left\vert K_{m,n}\right\vert \leq \left( 2m-1\right) !!n^{m},
\end{equation*}%
where $\left( 2m-1\right) !!=\prod_{i=1}^{m}\left( 2i-1\right) \leq
m!2^{m-1} $.

Now let $v\in H$ be arbitrary. By the definition of $V$ and $Q_{V}$ we have
for any $v\in H$ that%
\begin{equation*}
\left\langle \mathbb{E}\left[ \left( Q_{V}\right) ^{m}\right]
v,v\right\rangle =\sum_{j_{1},\dots,j_{2m}=1}^{n}\mathbb{E}\left[ \sigma
_{j_{1}}\sigma _{j_{2}}\cdots\sigma _{j_{2m}}\right] \left\langle
v,x_{j_{1}}\right\rangle \left\langle x_{j_{2}},x_{j_{3}}\right\rangle
\dots\left\langle x_{j_{2m}},v\right\rangle .
\end{equation*}%
The properties of independent Rademacher variables imply that $\mathbb{E}%
\left[ \sigma _{j_{1}}\sigma _{j_{2}}\cdots\sigma _{j_{2m}}\right] =1$ if $j\in
K_{m,n}$ and zero otherwise. For $m=1$ this shows $\left\langle \mathbb{E}%
\left[ \left( Q_{V}\right) ^{m}\right] v,v\right\rangle =\left\langle 
\mathbb{E}\left[ Q_{V}\right] v,v\right\rangle =\sum_{j}\left\langle
v,x_{j}\right\rangle ^{2}$. For $m>1$, since $\left\Vert x_{i}\right\Vert
\leq b$ and by two applications of the Cauchy-Schwarz inequality 
\begin{eqnarray*}
\left\langle \mathbb{E}\left[ \left( Q_{V}\right) ^{m}\right]
v,v\right\rangle  &=&\sum_{\mathbf{j}\in K_{m,n}}\left\langle
v,x_{j_{1}}\right\rangle \left\langle x_{j_{2}},x_{j_{3}}\right\rangle
\cdots\left\langle x_{j_{2m}},v\right\rangle  \\
&\leq &b^{2\left( m-1\right) }\sum_{\mathbf{j}\in K_{m,n}}\left\vert
\left\langle v,x_{j_{1}}\right\rangle \right\vert \left\vert \left\langle
x_{j_{2m}},v\right\rangle \right\vert  \\
&\leq &b^{2\left( m-1\right) }\left( \sum_{\mathbf{j}\in
K_{m,n}}\left\langle v,x_{j_{1}}\right\rangle ^{2}\right) ^{1/2}\left( \sum_{%
\mathbf{j}\in K_{m,n}}\left\langle v,x_{j_{2m}}\right\rangle ^{2}\right)
^{1/2} \\
&=&b^{2\left( m-1\right) }\sum_{j}\left\langle v,x_{j}\right\rangle
^{2}.\sum_{\mathbf{j}\in K_{m,n}\text{ such that }j_{1}=j}1 \\
&=&\left\langle \mathbb{E}\left[ Q_{V}^{m}\right] v,v\right\rangle \times
\left( 2m-1\right) !!n^{m-1}b^{2\left( m-1\right) } \\
&\leq &m!\left( 2nb^{2}\right) ^{m-1}\left\langle \mathbb{E}\left[ Q_{V}%
\right] v,v\right\rangle .
\end{eqnarray*}%
The conclusion follows since for self-adjoint matrices $\left( \forall
v,\left\langle Av,v\right\rangle \leq \left\langle Bv,v\right\rangle \right)
\implies A\preceq B$.\bigskip 
\end{proof}

If we apply this lemma to the vectors $V_{t}$ defined above with $b=1/\left(
s_{t}n_{t}\right) $, using $s_{t}^{2}n_{t}=\bar{n}$, we obtain%
\begin{equation*}
\mathbb{E}\left[ \left( Q_{Vt}\right) ^{m}\right] \preceq m!\left( \frac{2}{%
s_{t}^{2}n_{t}}\right) ^{m-1}\mathbb{E}\left[ Q_{V}\right] =m!\left( \frac{2%
}{\bar{n}}\right) ^{m-1}\mathbb{E}\left[ Q_{V}\right] ,.
\end{equation*}%
Applying the last conclusion of Theorem \ref{Theorem Main Tool} with $R=2/%
\bar{n}$ and $d=\bar{n}T$ now yields%
\begin{equation*}
\sqrt{\mathbb{E}_{\sigma }\left\Vert \sum Q_{V_{t}}\right\Vert _{\infty }}%
\leq \sqrt{\left\Vert \sum \mathbb{E}_{\sigma }Q_{V_{t}}\right\Vert _{\infty
}}+\sqrt{\frac{2}{\bar{n}}\left( \ln \left( \bar{n}T\right) +1\right) },
\end{equation*}%
and since $\sum_{t}\mathbb{E}_{\sigma }Q_{V_{t}}=\sum_{t}\left( 1/\bar{n}%
\right) \sum_{i}\left( 1/n_{t}\right) Q_{X_{i}^{t}}=T\hat{C}/\bar{n}$ we get%
\begin{eqnarray}
\mathcal{R}\left( \mathcal{W},\mathbf{\bar{Z}}\right)  &\leq &\frac{2LB}{%
\sqrt{T}}\sqrt{\mathbb{E}_{\sigma }\left\Vert \sum_{t}Q_{V_{t}}\right\Vert
_{\infty }}  \notag \\
&\leq &2LB\left( \sqrt{\frac{\left\Vert \hat{C}\right\Vert _{\infty }}{\bar{n%
}}}+\sqrt{\frac{2\left( \ln \left( \bar{n}T\right) +1\right) }{\bar{n}T}}%
\right) .  \label{Empirical bound}
\end{eqnarray}%
Together with (\ref{empirical Rademacher bound}) and the initial remarks in
this section this proves the second part of Theorem \ref{Theorem Main}.

To obtain the first assertion we take the expectation of (\ref{Empirical
bound}) and use Jensen's inequality, which then confronts us with the
problem of bounding $\mathbb{E}\left\Vert \hat{C}\right\Vert _{\infty }$ in
terms of $\left\Vert C\right\Vert _{\infty }=\left\Vert \mathbb{E}\hat{C}%
\right\Vert _{\infty }$. Note that $\bar{n}T\hat{C}=\sum_{t}%
\sum_{i=1}^{n_{t}}Q_{X_{i}^{t}}$. Here Theorem \ref{Theorem Main Tool}
doesn't help because the covariance may have infinite rank, so that we
cannot find a finite dimensional subspace containing the ranges of all the $%
Q_{X_{i}^{t}}$. But since $\left\Vert X_{i}^{t}\right\Vert \leq 1$ all the $%
Q_{X_{i}^{t}}$ satisfy $0\preceq Q_{X_{i}^{t}}\preceq I$ and are rank-one
operators, we can invoke Theorem \ref{Theorem Oliveira} with $d=\bar{n}T$.
This gives 
\begin{equation*}
\sqrt{\mathbb{E}\left\Vert \hat{C}\right\Vert }\leq \sqrt{\left\Vert
C\right\Vert }+\sqrt{\frac{6\left( \ln \left( 4\bar{n}T\right) +1\right) }{%
\bar{n}T}},
\end{equation*}%
and from (\ref{Empirical bound}) and Jensen's inequality and some
simplifications we obtain%
\begin{eqnarray*}
\mathbb{E}\mathcal{R}\left( \mathcal{W},\mathbf{\bar{Z}}\right)  &\leq
&2LB\left( \sqrt{\frac{\mathbb{E}\left\Vert \hat{C}\right\Vert _{\infty }}{%
\bar{n}}}+\sqrt{\frac{2\left( \ln \left( \bar{n}T\right) +1\right) }{\bar{n}T%
}}\right)  \\
&\leq &2LB\left( \sqrt{\frac{\left\Vert C\right\Vert _{\infty }}{\bar{n}}}+5%
\sqrt{\frac{\ln \left( nT\right) +1}{\bar{n}T}}\right) ,
\end{eqnarray*}%
which, together with (\ref{Rademacher bound}), gives the first assertion of
Theorem \ref{Theorem Main}.\bigskip 

A similar application of Theorem \ref{Theorem Oliveira} applied to the bound
(\ref{Kakade bound}) in \cite{KakadeEtAl 2012} yields the bound (\ref{Kakade
bound distribution dependent}).

\end{document}